\newcommand{\ccell}{\cellcolor{gray!30}}
\theoremstyle{plain}
\newtheorem{theorem}{Theorem}[section]
\newtheorem{proposition}[theorem]{Proposition}
\theoremstyle{definition}
\newtheorem{definition}[theorem]{Definition}
\theoremstyle{remark}
\newtheorem{remark}[theorem]{Remark}
\newcommand{\kibitz}[2]{\ifnum\Comments=1{\color{#1}{#2}}\fi}
\newcommand{\mjc}[1]{\kibitz{blue}{[MJC: #1]}}
\newcommand{\es}[1]{\kibitz{purple}{[ES: #1]}}
\newcommand{\kibitzAdd}[2]{\ifnum\CommentsAdd=1{\color{#1}{#2}}\fi}
\definecolor{english}{rgb}{0.0, 0.5, 0.0}
\icmltitlerunning{LLM-Powered Preference Elicitation in Combinatorial Assignment}
\begin{document}

\twocolumn[
\icmltitle{LLM-Powered Preference Elicitation in Combinatorial Assignment} %

\icmlsetsymbol{equal}{*}

\begin{icmlauthorlist}
\icmlauthor{Ermis Soumalias}{equal,comp}
\icmlauthor{Yanchen Jiang}{equal,yyy}
\icmlauthor{Kehang Zhu}{equal,yyy}
\icmlauthor{Michael Curry}{sch}
\icmlauthor{Sven Seuken}{comp}
\icmlauthor{David C. Parkes}{yyy}
\end{icmlauthorlist}

\icmlaffiliation{yyy}{Harvard University}
\icmlaffiliation{comp}{Department of Informatics, University of Zurich and ETH AI Center}
\icmlaffiliation{sch}{Department of Computer Science, University of Illinois Chicago}

\icmlcorrespondingauthor{Ermis Soumalias}{ermis@if.uzh.ch}
\icmlcorrespondingauthor{Yanchen Jiang}{yanchen\_jiang@g.harvard.edu}
\icmlcorrespondingauthor{Kehang Zhu}{kehangzhu@gmail.com}

\icmlkeywords{Preference Elicitation, Combinatorial Assignment, Large Language Models, LLMs, Course Allocation}

\vskip 0.3in
]
\printAffiliationsAndNotice{\icmlEqualContribution}

\begin{abstract}
We study the potential of large language models (LLMs) as proxies for humans to simplify preference elicitation (PE) in combinatorial assignment. While traditional PE methods rely on iterative queries to capture preferences, LLMs offer a one-shot alternative with reduced human effort. 
We propose a framework for LLM proxies that can work in tandem with SOTA ML-powered preference elicitation schemes. 
Our framework handles the novel challenges introduced by LLMs, such as response variability and increased computational costs. 
We experimentally evaluate the efficiency of LLM proxies against human queries 
in the well-studied course allocation domain, and we investigate the model capabilities required for success. We find that our approach improves allocative efficiency by up to 20\%, and these results are robust across different LLMs and to differences in quality and accuracy of reporting. 
\end{abstract}

\section{Introduction}
\label{intro}
\textit{Preference elicitation (PE)} is essential for effective decision-making in high-dimensional settings. PE methods aim to balance two objectives: 
minimizing the cognitive burden on users by limiting the number of queries they ask,
and maximizing the information obtained about user preferences for subsequent decision-making.
The problem of PE arises across many domains. In this work, we focus on PE in combinatorial assignment (combinatorial auctions, combinatorial course allocation, etc.).

The combinatorial assignment doamins suffers  from the curse of dimensionality (i.e., the bundle space grows exponentially in the number of items).
Moreover, \citet{nisan2006communication} proved that, for arbitrary value functions, achieving full efficiency in combinatorial auctions requires an exponential number of bids. 
To address this, most mechanisms in practice restrict users to report their preferences through structured languages. However, this limits the users' ability to fully articulate their preferences~\cite{nisan2000bidding,sandholm2000improved,fujishima1999taming}.
Thus, the focus has shifted towards \textit{iterative mechanisms}, where bidders interact with the mechanism over a series of rounds, providing only a limited amount of information in each round with the aim of maximizing the efficiency of the final allocation.

In this work, we focus on the \textit{course allocation problem}, a well-studied combinatorial assignment problem \citep{budish2011combinatorial,budish2021can}. In this problem, an educational institution must assign courses to students who often have combinatorial preferences over course bundles, and there is limited seat availability in each course.
\citet{budish2017course} introduced the \textit{Course Match (CM)} mechanism, a significant improvement over previously existing approaches. CM has since been adopted by leading institutions such as Wharton at the ~\citet{Course} and \citet{Coursea}. However, CM's reporting language is both restrictive and cognitively demanding. As a result, students often make reporting errors, which negatively impact the mechanism's performance \citep{budish2021can}.
The iterative \textit{MLCM} mechanism proposed by \citet{soumalias2024machine} addresses these limitations by allowing users to answer adaptively chosen \textit{comparison queries (CQs)}. 
This alleviates reporting errors and leads to significant efficiency gains. 
While CQs are easy for students to answer, a long sequence of iterative queries can still pose a cognitive burden, and students remain inherently limited by the mechanism's reporting language.

We study the use of LLMs as proxies, answering queries for humans guided by a small amount of textual human input, with the goal of both reducing the reporting burden for the users and allowing a richer expression of preferences. 
For instance, consider the following illustrative description of preferences:
\begin{quote}
``I prefer to take courses that are scheduled as closely together as possible so I can have an extra day off. If courses have a laboratory section, I strongly prefer that it be in the morning. Course A and Course B complement each other, and I would prefer to take them together to save time and effort. I do not want to take Course D and E together as they cover similar topics. Nonetheless, I need at least one of them to fulfill my requirements...''
\end{quote}
The textual description encodes combinatorial information about preferences without requiring commitment to any reporting language up-front, and it allows the students to easily express preferences over whole categories of courses (rather than labeling each one). It also provides a more natural elicitation process for students, who may find it easier to express their preferences in free text rather than answering tens of CQs. 

Our aim is to create a framework enabling a mechanism that requires structured input to also leverage such useful yet imprecise natural language input.

A proxy-based approach to preference elicitation has been explored previously, demonstrating the potential of LLMs to simulate human responses~\cite{horton2023large,manning2024automated, park2024generative}.
 We establish that natural language input can be used within SOTA mechanisms for combinatorial assignment.
Our framework takes into account the unique properties of LLMs by using carefully-chosen acquisition functions to generate queries to the LLM proxy, chain of thought to improve the accuracy of the answer, and a noise-robust loss to incorporate that information. In extensive experiments, we verify the significance of all of these decisions, and demonstrate the robustness of our approach to different LLM architectures and differences in the detail and accuracy of preference reporting. Using only a one-shot natural language input per agent, our framework improves allocative efficiency in realistic scenarios by up to 20\%.

\section{Prior work}
In this section, we include prior work with a focus on the central course allocation problem and on LLMs. For a more comprehensive discussion of prior work on PE and machine learning in mechanism design, please see \Cref{app:morepriorwork}.

\paragraph{The course allocation problem}
The course allocation problem is a combinatorial assignment problem, motivated by the real-world challenge of assigning courses in business schools, where students compete for limited seats to build their schedules.
\cite{budish2017course}.
Early methods for performing course assignments at business schools included a draft and a bidding-based mechanism~\cite{sonmez2010course,brams1979prisoners}; however, these created incentives for strategic manipulation and resulted in poor outcomes.
As a combinatorial assignment problem, course allocation is subject to several impossibility results ruling out mechanisms with simultaneous good properties: most notably, combinatorial assignment mechanisms that are ex-post Pareto efficient and strategyproof must be dictatorships~\cite{papai2001strategyproof,hatfield2009strategy}.

To escape this impossibility result, \citet{budish2011combinatorial} proposed a mechanism that satisfies slightly relaxed versions of each of these desiderata: \textit{approximate competitive equilibrium from equal incomes (A-CEEI)}.
At a high level, A-CEEI simulates a competitive equilibrium form equal incomes, but the market need not exactly clear, and students may have slightly different initial endowments of money.

\citet{budish2017course} introduced \textit{Course Match (CM)}, a practical instantiation of A-CEEI tailored to the course allocation problem.
In CM, students report their value for each course, as well as positive or negative pairwise interactions between courses (because the users make these reports via graphical user interface, this is called the ``GUI language'' or ``GUI'').
CM treats these reports as reliably reflecting student preferences, and uses them to heuristically search for an assignment of courses that constitutes an A-CEEI.
CM has been successfully adopted at many leading institutions such as the Wharton School at the University of Pennsylvania and Columbia Business School~\cite{Course,Coursea}.

\paragraph{Machine Learning-powered Course Match}
Prior to CM's adoption by Wharton, \citet{budish2021can} conducted a lab experiment comparing it againt the previously used mechanism, the Bidding Points Auction. Students were happier with their allocation under CM and perceived it as more fair, leading to CM's adoption in practice.
At the same time, \citet{budish2021can} found that students seemed to have trouble with CM's reporting language: 
they make limited use of its features, 
and they sometimes even appear to make outright errors.
Motivated by this problem, \citet{soumalias2024machine} incorporated \textit{machine learning (ML)} into the CM pipeline.
While they start with the same user interface, they also ask students to answer pairwise CQs, which are easier for students. 
These CQs, combined with the reports submitted in the GUI language, are used to train neural network models of student preferences, which guide both query generation and the final allocation process.

\paragraph{Language models as proxies for humans}
A growing body of research explores employing LLMs as proxies for human participants in social and economic studies. 
\citet{horton2023large} provided an early demonstration of this approach by studying how LLM agents, endowed with carefully elicited human preferences, could generate interview responses that closely mirrored those given by human subjects. Building on this work, \citet{park2024generative} investigated whether LLM-based proxies could replicate qualitative interview responses as accurately as real participants. They found that LLM proxies matched human responses with an 85\% accuracy, suggesting substantial potential for LLMs to substitute for
human participants in certain contexts.

Subsequent work expanded the scope of LLM agents to broader social and economic scenarios. For example, \citet{brand2023using} and \citet{manning2024automated} studied how these models perform in auctions, negotiations, and marketing environments, providing further evidence that LLMs can effectively mimic human decision-making. More recently, researchers have applied LLMs as autonomous pricing agents for companies \cite{fish2024algorithmic} and as synthetic participants in auction design \cite{zhuevidence}.

In concurrent work, \citet{Huang25Accelerated} explore the use of LLMs as proxies in combinatorial \textit{allocation} domains, in contrast to our focus on large-scale combinatorial assignment. This distinction introduces a different set of challenges, resulting in a different framework for PE.

\paragraph{Mechanism Design for LLMs.}
A related area pioneered by \citet{duetting2024mdforllms} is that of mechanism design for LLMs.
In this setting, agents---typically advertisers---compete to influence an LLM's reply to a user's query, aiming to better represent their interests.
\citet{soumalias2024truthful} introduce a truthful mechanism based on importance sampling that converges to the optimal distribution. 
\citet{mohammad2024rag} leverage retrieval-augmented generation to create an auction where a pre-generated ads are probabilistically retrieved for each discourse segment, according to both their bid and relevance. 
\citet{bergemann2024datadrivenmd} explore an extension of this problem in which agents possess both private types and signals.

\section{Preference Elicitation Framework}
\label{sec:PE_framework}

\begin{figure*}[h]
    \centering
\includegraphics[width=0.82\linewidth]{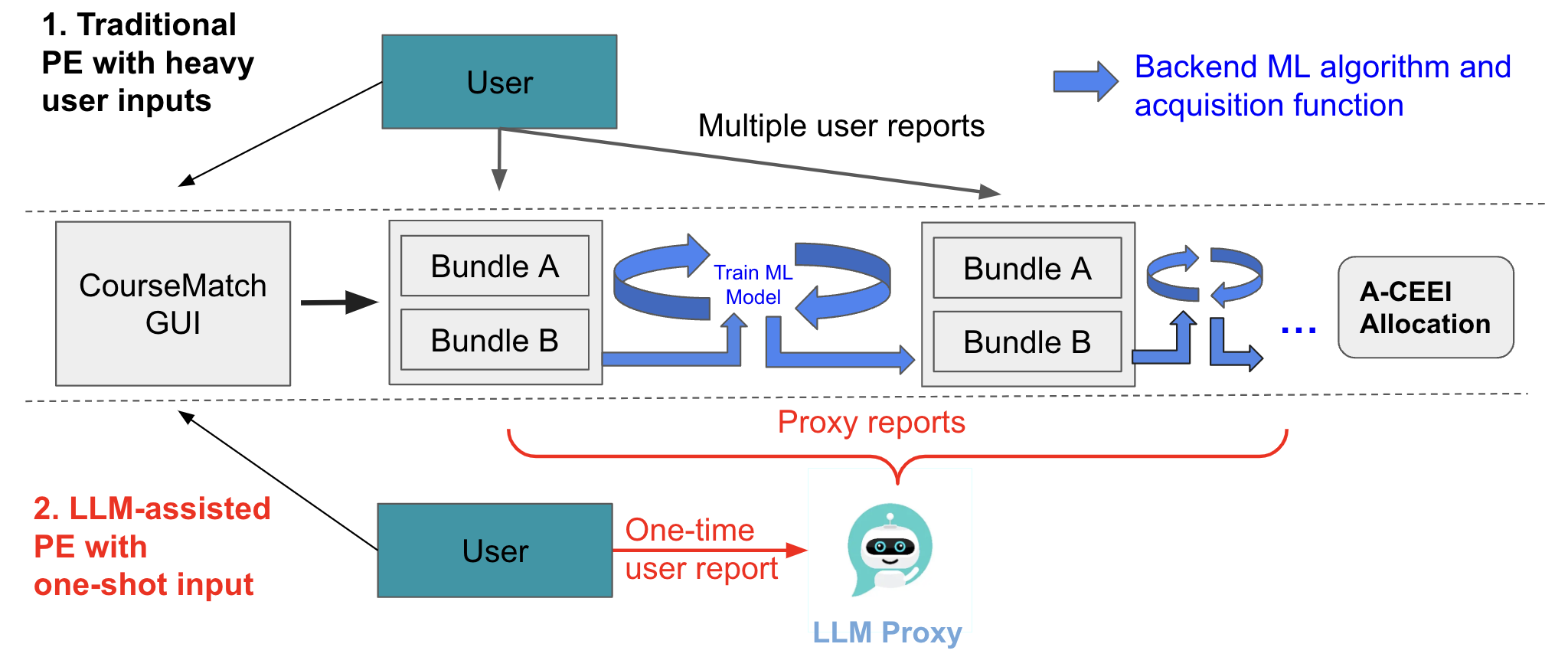}
    \caption{A schematic flowchart comparing \citet{Soumalias2024MLCCA} with the LLM proxy algorithm in this paper.} 
    \label{fig:flowchart}
\end{figure*}

We build on the MLCM mechanism \citep{soumalias2024machine}, which initializes a \textit{cardinal} ML model of each student's value function based on her GUI reports.
Following the Bradley-Terry model~\cite{bradleyterry1952rank}, this cardinal model is further trained using the student's responses to pairwise \textit{comparison queries (CQs).}
In MLCM, the CQs are determined by a specialized acquisition function that maintains an ordinal list of schedules based on the student's previous answers. 
Finally, an A-CEEI is calculated, treating each student's ML model as her value function.

We extend MLCM by designing a framework that allows students to provide one-shot natural language input, enabling an LLM proxy to answer CQs on their behalf. This approach lets students express their preferences naturally and eliminates the need for iterative interaction with the mechanism. Additionally, the LLM proxy can handle a large number of CQs without imposing any cognitive burden on the student, unlocking potential efficiency gains.
While our implementation focuses on course allocation, in \Cref{sec:beyond_course_allocation} we discuss its applicability to other combinatorial allocation settings.

\subsection{Noise Robust Loss}
\label{sec:NoiseRobustLoss}
LLM responses provide a valuable but inherently noisy signal for PE. 
To combat this this issue,  we leverage the noise-robust \textit{generalized cross-entropy} (GCE) loss \citep{Zhang18Generalized} instead of the conventional binary cross entropy loss when training student models on pairwise CQs. This allows us to handle the LLM noise effectively, as formalized in the following result:

\begin{proposition} \label{prop:true_minimizer}
In our framework, as long as the LLM proxy accuracy is over $50$\%, 
the student's true valuation function is a minimizer of the training loss in the noisy dataset produced by the LLM proxy. 
\end{proposition}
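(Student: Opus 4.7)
The plan is to recast the training task as noisy binary classification and invoke the noise-tolerance property of the GCE loss established by \citet{Zhang18Generalized}. First, I would formalize the generative model. Each CQ presents a pair of bundles $(A,B)$, and the Bradley--Terry model with the student's true valuation $v^*$ assigns the clean label $y\in\{0,1\}$ probability $\mathbb{P}(y=1\mid A,B)=\sigma(v^*(A)-v^*(B))$. The LLM proxy outputs a noisy label $\tilde{y}$ satisfying $\mathbb{P}(\tilde{y}\neq y\mid A,B,y)=1-p$, where $p>1/2$ is the LLM's accuracy, assumed independent of the class and of the instance. This is symmetric (uniform) class-conditional label noise with flip rate $\eta:=1-p<1/2$.

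Second, I would verify the ``symmetric loss'' hypothesis underlying the noise-tolerance theorem of \citet{Zhang18Generalized}: for the GCE loss with parameter $q$ applied to a Bradley--Terry predictor $f$ whose outputs satisfy $f_0+f_1=1$, the quantity $\sum_{c\in\{0,1\}} L_q(f;(A,B),c)$ is constant in $f$ at $q=1$ and in the binary case retains the bounded-range structure that yields noise tolerance for $q\in(0,1]$. This is the structural fact that makes GCE robust to symmetric label flips while retaining good optimization behavior.

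Third, I would apply the noise-tolerance theorem: since $\eta<1/2$, every minimizer of the expected GCE loss under the clean label distribution remains a minimizer under the noisy distribution. Because the Bradley--Terry model parametrized by $v^*$ exactly reproduces the clean conditional label distribution, it attains the minimum clean GCE risk over the Bradley--Terry family, and therefore also minimizes the expected GCE loss on the LLM-labelled (noisy) dataset.

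The main obstacle will be justifying the \emph{symmetry} of the LLM noise: real LLM errors may depend on the specific pair $(A,B)$ rather than only on the clean label $y$, which would break strict class-conditional symmetry. I would therefore state the symmetry of LLM errors as an explicit modelling assumption and argue informally for its plausibility in our setting. A secondary subtlety is identifiability: Bradley--Terry valuations are only pinned down up to an additive constant, which is consistent with the proposition's ``\emph{a} minimizer'' (as opposed to ``the unique minimizer'') phrasing.
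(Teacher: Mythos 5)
Your proposal is correct and takes essentially the same route as the paper's proof: both reduce the problem to binary classification under symmetric label noise with flip rate $\eta = 1 - p < 1/2$ and invoke the noise-tolerance property of symmetric losses (the paper cites \citet{ghosh2015} for the tolerance theorem and \citet{Zhang18Generalized} for the symmetry of GCE) to conclude that the true valuation, which minimizes the clean Bradley--Terry risk, also minimizes the risk on the LLM-labelled data. Your explicit flagging of the class-conditional noise-symmetry assumption and of the additive-constant identifiability is, if anything, more careful than the paper, which asserts these points without comment.
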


\begin{proof}
The proof is deferred to  \Cref{sec:PE_framework_details}.
\end{proof}

\begin{remark}
In our experiments in \Cref{sec:experimets}, for \textit{all} LLM architectures tested, the LLM proxy accuracy is over $71$\% (see \Cref{tab:llm_architecture_ablation} as well as \Cref{app:cqaccuracy}). 
The proposition is thus relevant to our system's real-world performance.
In \Cref{sec:gce_vs_bce} we show the practical significance of GCE, as it doubles the effectiveness of our framework over BCE.
\end{remark}

\subsection{Comparison Query Generation}
\label{sec:CQGeneration}

Although LLM proxies can respond to any number of queries without a cognitive cost to the student, each response still imposes a computational cost to the mechanism. 
Thus, the choice of the \textit{acquisition function} that selects the CQs is vital to both reduce computational costs and improve allocative efficiency.

Although we do not fine-tune language models, our preference learning problem from comparison queries has many similarities to reinforcement learning from human feedback (RLHF)~\cite{Bai22Training}; 
(1) We use CQs, and there is an inherent connection between our loss function and the Bradley-Terry model (see \Cref{sec:training_alg} for details) 
(2) For each element of each CQ, we can choose among an enormous set of possible elements. 
(3) We face a similar \textit{Bayesian Optimization task}; 
our goal is not to improve learning performance in itself. 
Rather, our goal is to maximize the student's true value for the bundle she receives under her final trained model. 
This is similar to RLHF, where the goal is to maximize the user's expected satisfaction with the generated answers \citep{Bai22Training}.

Given this connection to RLHF, we draw on recent work exploring different acquisition functions for this task \citep{Dwaracherla24Efficient}.
Because the best-performing acquisition functions in that line of research also leverage epistemic uncertainty, 
we create \emph{epistemic Monotone Value Neural Networks (eMVNNs)},
an extension of MVNNs \citep{weissteiner2022monotone} that can also represent the epistemic uncertainty of the network. 
MVNNs are a specialized architecture for combinatorial allocation that incorporates at a structural level the prior information that value functions are monotone and that the value of the empty bundle is zero. 
The best performance is achieved when queries are generated using Double Thompson Sampling, with uncertainty captured by the eMVNN.
For more details on MVNNs and our eMVNNS, please see \cref{sec:MVNNs,sec:eMVNNs}.

\begin{remark}
\Cref{sec:acquisition_function_comparison} demonstrates the practical significance of modeling epistemic uncertainty and choosing the appropriate acquisition function,
as they double performance versus \emph{all} tested alternatives.
\end{remark}

\section{Large Language Models} \label{sec:llms}

In this section, we describe our use of LLMs: what information they are given, and how we generate outputs.
We emphasize that we use LLMs for \emph{\mbox{two distinct purposes}}.
The first and main purpose is as a \emph{\mbox{proxy}} to answer CQs based on free-text preference descriptions provided by the students; these LLM proxies are part of our framework and would be used if it were actually deployed (\Cref{sec:llmproxy}).
The second is as a \emph{{\mbox{surrogate student}}} to generate free-text preference descriptions from some underlying utility model; these are part of our experimental methodology, but would not be used if the mechanism were actually deployed---instead, real students would produce free-text preference descriptions based on their actual preferences (\Cref{sec:simulatedstudent}).

\subsection{LLMs as Proxies for Comparison Queries} \label{sec:llmproxy}

Given students' textual input, LLMs can extract underlying preferences and answer CQs in 
place of the student, reducing cognitive load.
However, in our experiments, we observed that directly prompting LLMs to answer CQs often leads to inaccuracies, particularly due to hallucinations when the input text is long and contains many courses. 

To address this issue, we implemented a structured \textit{chain-of-thought (CoT)} reasoning approach \citep{wei2022chain}. The reasoning process guides the LLM step by step: first recalling the items in the bundle, then systematically addressing key components such as preferences, complements, and substitutes, before arriving at a final choice. This approach reduces hallucinations and improves the accuracy of the LLM's responses.
Note that real students do not have to perform any CoT, as this is done by their LLM proxy.

We designed a structured template using XML tags to organize the reasoning process into sections. A sketch of the template follows, where ellipses (`...') indicated omitted details for brevity:

\vspace{-0.2cm}
\begin{quote}
\small{
\texttt{$<$PREFERENCES$>$}

Bundle A: [Recall the courses in the bundle and list matching preferences. e.g. ...] 

Bundle B: [...] 

\texttt{$<$/PREFERENCES$>$}

\texttt{$<$COMPLEMENTS$>$} [...] \texttt{$<$/COMPLEMENTS$>$}

\texttt{$<$SUBSTITUTES$>$} [...] \texttt{$<$/SUBSTITUTES$>$}

\texttt{$<$REASONING$>$} [...] \texttt{$<$/REASONING$>$}

\texttt{$<$CHOICE$>$} Bundle X \texttt{$<$/CHOICE$>$}
}
\end{quote}
\vspace{-0.3cm}

The full template, along with examples and detailed explanations, is provided in Appendix \ref{appendix:cqprompt}. An experimental analysis showing the importance of CoT is in \Cref{sec:cotexperiments}.

\subsection{Simulated Student Responses}
\label{sec:simulatedstudent}
We also use LLMs to simulate how students might describe their preferences in free text before answering CQs.
In our experiments, Llama-3.1 8b \citep{dubey2024llama}\footnote{
In \Cref{sec:llm_architecture_ablation} we test our framework with various LLM architectures and observe similarly strong results. 
} is given numerical utilities (including complements and substitutes) from \citet{soumalias2024machine} and instructed to write a first-person narrative that omits explicit numbers.
The LLM was instructed to adopt the role of a student describing their preferences qualitatively, avoiding direct numerical outputs.
A sketch of the template follows, where ellipses (`...') indicated omitted details for brevity:

\vspace{-0.3cm}
\noindent %
\begin{quote}
\textit{
Please act as a student describing their course preferences for the upcoming semester.}

\textit{Write a detailed, first-person paragraph about your preferences based on the following information: [...]}

\textit{Aim for qualitatively detailed description and avoid saying exact numerical values [...]}
\end{quote}
\vspace{-0.3cm}
The full prompt template is in Appendix \ref{appendix:prompt}, with a simulated student response example in Appendix \ref{appendix:examplestudent}.  Further experiments (Appendix \ref{sec:appendix-conciseness}) show the framework remains effective with when responses are made more concise.

\section{Experiments} \label{sec:experimets}
In this section, we experimentally evaluate our LLM-powered PE framework.

\subsection{Experiment Setup} \label{sec:experiment_setup}

We use the Course Allocation Simulator from \citet{soumalias2024machine} because it offers a unique combination of extensive preference data and validated error rates from real lab experiments.
Specifically, its configuration closely mirrors the results in \citet{budish2021can}, matching both the frequency and severity of student mistakes within a 1\% margin. 
To focus solely on learning performance, we set all course capacities to infinity, ensuring that each student can receive any combination of five courses.
In line with our LLM-based design (\Cref{sec:llmproxy,sec:simulatedstudent}), we first convert each synthetic student’s cardinal preferences into a free-text description using Llama-3.1.
These narratives are then passed to the LLM proxy, which answers comparison queries by interpreting the student’s stated preferences.
Further implementation details, including our hyperparameter optimization (HPO) protocol, are provided in \Cref{sec:HPO}.

\subsection{Efficiency Results} \label{sec:experiment_results_efficiency}
In \Cref{fig:allocated_bundle_value} we plot the student's allocated bundle value against the number of CQs answered by her proxy LLM.\footnote{Note that in our framework, each student provides only a single piece of natural language input in a one-shot manner. Thus, the x-axis reflects the computational cost for the mechanism rather than the cognitive effort required from the student.}
We normalize the value of the bundle each student receives by the bundle that the student would have received based on her initial GUI reports.
Our results demonstrate that, with just one natural language input from the student, 
our framework increases allocative value by $19.3$\%.
Furthermore, the student prefers her allocation under our framework in $74$\% percent of the cases. 
For context, in this same simulation, MLCM, the SOTA mechanism for course allocation, 
increases allocative efficiency by $14.2$\% using $20$ student-answered CQs. 
This highlights the effectiveness of our framework in improving allocation outcomes with significantly lower student effort.

\begin{figure*}[h]
    \centering
            \begin{subfigure}[t]{0.49\textwidth} %
        \centering
        \includegraphics[width=0.9\linewidth]{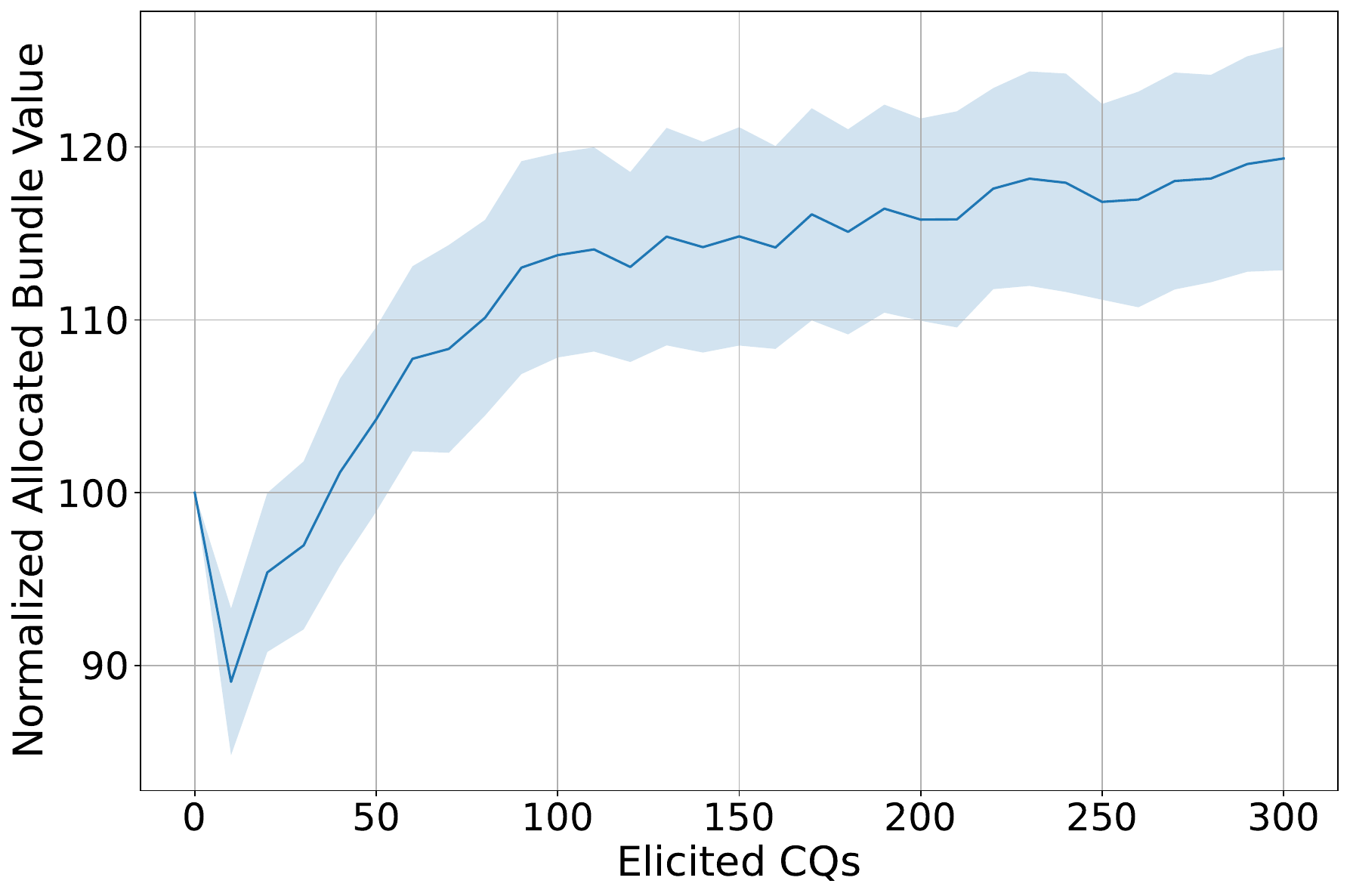}
        \caption{Normalized allocated bundle}
        \label{fig:allocated_bundle_value}
    \end{subfigure}
    \hfill
    \begin{subfigure}[t]{0.49\textwidth} %
        \centering
        \includegraphics[width=0.9\linewidth]{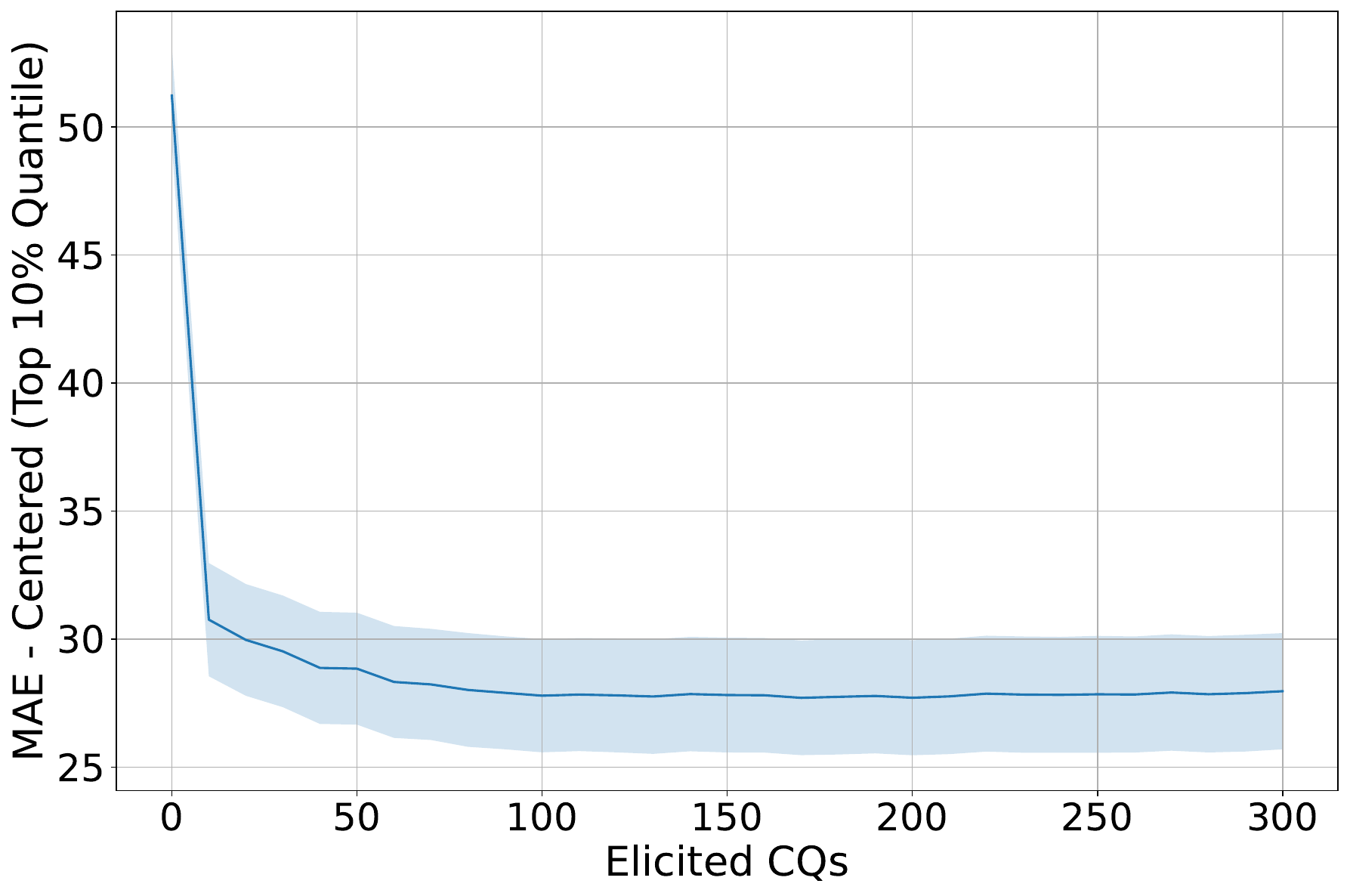}
        \caption{Centered MAE.}
        \label{fig:mae_centered}
    \end{subfigure}
    \caption{Comparison of (a) normalized allocated bundle value and (b) centered MAE, both as functions of the number of LLM-answered CQs. Shown are averages over 100 instances including 95\% CIs.}
\end{figure*}

\subsection{Learning Performance} \label{sec:experiment_results_learning}

\begin{table}[ht]
\centering
\label{tab:learning_metrics}
\begin{tabular}{l r c c c }
\toprule
\textbf{Metric} & \multicolumn{2}{c}{\textbf{Whole Test Set}} & \multicolumn{2}{c}{\textbf{Top 10\% Quantile}}  \\
\cmidrule(lr){2-3} \cmidrule(lr){4-5} 
& \textbf{Before} & \textbf{After} & \textbf{Before} & \textbf{After}  \\
\midrule
$\text{MAE}_C$      & $46.14$  & \ccell $30.69$ & $51.23$ & \ccell $27.96$   \\
$\text{KT}$         & $0.266$  & \ccell $0.389$ & $0.121$ & \ccell $0.283$  \\
$\text{MSE}_C$      & $3475$   & \ccell $1942$  & $4172$  & \ccell $1500$   \\
$R^2_C$             & $-0.62$  & \ccell $0.22$  & $-2.50$ & \ccell $0.14$  \\
\bottomrule
\end{tabular}
\caption{Comparison of learning metrics before and after applying our framework. 
Each metric is reported for the whole test set and the top 10\% quantile in terms of student value. 
Shown are averages over $100$ runs.
Winners based on a paired t-test with $\alpha=1\%$ marked in gray.}
\label{tab:learning_comparison}
\end{table}

To assess our framework's impact on the generalization performance of a student's ML model, in \Cref{fig:mae_centered} we plot ($\text{MAE}_C$), a shift invariant measure of the mean absolute error  
against the number of LLM-answered CQs.

We focus on a shift-invariant regression metric because the allocation algorithm assigns to each student the bundle with the highest predicted value--based on her ML model--that is attainable for her. 
Thus, learning a student's value function up to a constant shift suffices, since it results in the same (optimal) allocation as the true value function under any set of constraints.
Similarly, we focus on the student's top quantile of bundles, as these are the most critical for ensuring high-value allocations. Our results demonstrate that our framework leads to an immediate and significant reduction in $\text{MAE}_C$, nearly halving its original value.

In \Cref{tab:learning_comparison}, we further analyze the impact of our framework on the generalization performance of a student's ML model using both rank-based and shift-invariant learning metrics. 
We report results for the top quantile to assess performance in the most consequential part of the bundle space, as well as across the entire bundle space to measure overall generalization.
In \Cref{sec:learning_comparison_details} we present an expanded version of this table and find similarly strong results for more quantiles.

\Cref{tab:learning_comparison} demonstrates that our framework—using just a \textit{single} natural language input from the student—substantially enhances all learning metrics, both rank-based and shift-invariant. This improvement is even more pronounced in the top quantile, the most critical region from a Bayesian optimization perspective. Moreover, the performance gains achieved by our framework are statistically significant at the $\alpha = 1$\% confidence level for \emph{all} metrics and quantiles.

Taken together, our results in \Cref{sec:experiment_results_efficiency,sec:experiment_results_learning} show that our LLM-powered PE framework leads to vastly improved learning performance and allocative efficiency.

\subsection{Robustness to Students' Mistakes}

In this section, we evaluate our framework's performance when changing the severity of the students' mistakes in the GUI language. 
To do this, we multiply all parameters affecting their mistake profile in the simulator by a constant  $\gamma$. 
For $\gamma >1$, students make more mistakes than in the default profile, and for $\gamma < 1$ the opposite is true.  
Importantly, we do not change any of the hyperparameters of our framework compared to their optimized values for our default setting.

In \Cref{tab:noise_ablation}, we present the normalized allocation value, and percentage of instances where our framework increases allocative value (over just the GUI reports) as a function of the mistake multiplier. 
We observe that our framework improves average allocation even for $\gamma = 0.5$.
Importantly, mistakes do not scale linearly; in that case, students make about half the amount of mistakes reported in \citet{budish2021can}, and the severity of those mistakes is reduced by over $80$\% \citep[Section 7.4]{soumalias2024machine}.
Overall, these results demonstrate that our framework is robust to varying mistake severity, consistently improving allocation outcomes even under significantly reduced or amplified error profiles.

\subsection{Evaluating the Effectiveness of Chain of Thought}
\label{sec:cotexperiments}
In this section, we evaluate the effectiveness of CoT prompting.  
We compare using our framework with and without the CoT reasoning teamplate introduced in \Cref{sec:llmproxy}.

Table~\ref{tab:allocated_bundle_values} reveals that incorporating CoT significantly improves performance across all metrics: the accuracy of the LLM proxy increases from 59\% to 72\%, and the normalized allocated bundle values show statistically significant gains (visualized in Figure~\ref{fig:llm_cq_value_cot}). 
Without CoT, despite using optimal hyperparameters, performance improvements remain statistically insignificant. This comprehensive improvement in accuracy is further illustrated in Figure~\ref{fig:llm_cq_correctness_cot} (Appendix~\ref{sec:additional_results}), demonstrating the critical role of CoT reasoning in enhancing performance.

\begin{figure}[h]
    \centering
    \includegraphics[width=\linewidth]{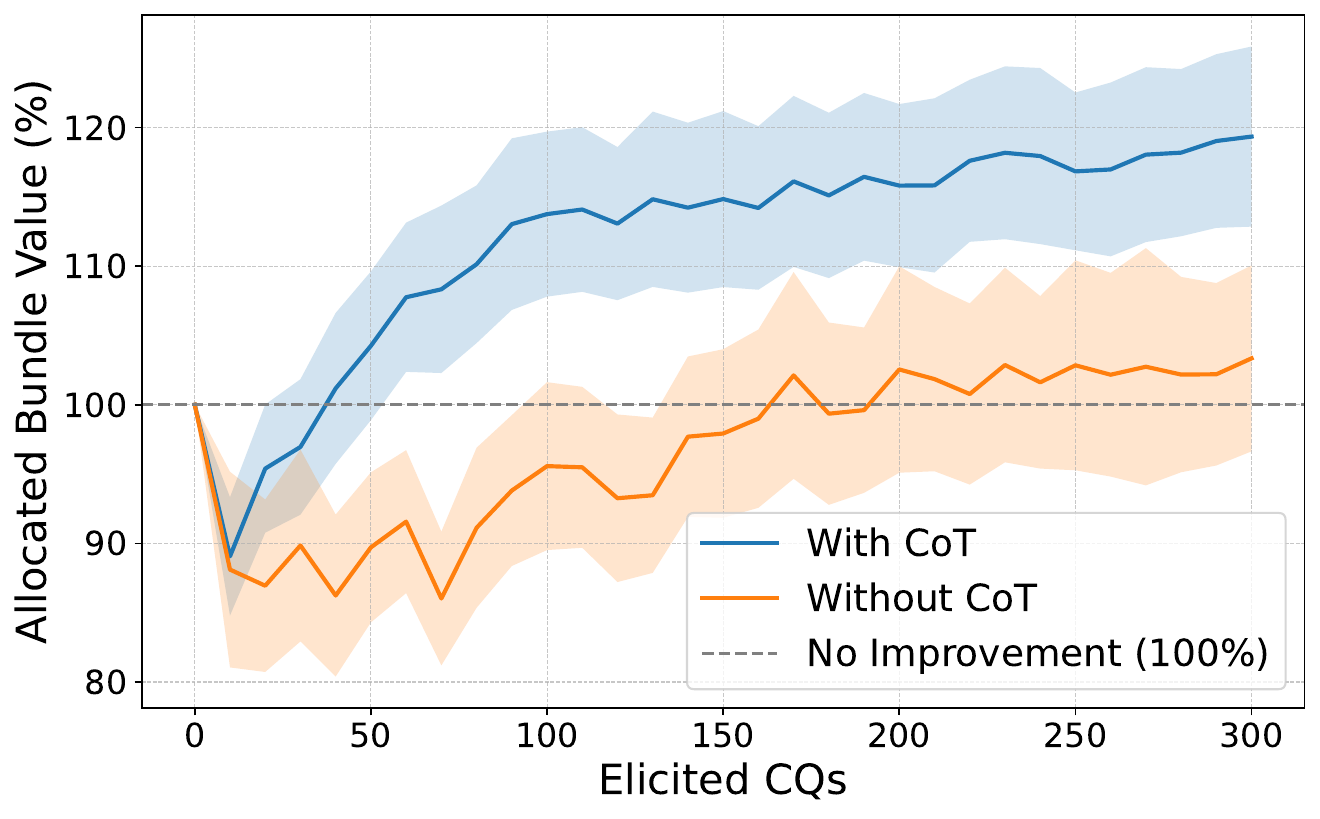}
    \vspace{-0.5cm}
    \caption{Comparison of the normalized allocated bundle value with and without the Chain-of-Thought (CoT) reasoning approach using LLaMA 3.1 8b model. The use of CoT results in a statistically significant improvement in the allocated bundle value.}
    \label{fig:llm_cq_value_cot}
\end{figure}

\begin{table}[ht]
\centering
\small
\setlength{\tabcolsep}{4pt}
\begin{tabular}{l c c c c c}
\toprule
\textbf{Version} & \textbf{Normalized} & \multicolumn{2}{c}{\textbf{\% of Runs}} & \textbf{P-value} & \textbf{Accuracy} \\
\cline{3-4}
& \textbf{Value} \(\pm\ CI\)
& \textbf{Better}
& \textbf{Worse} 
& 
& \textbf{ (\%) }\\
\midrule
w/ CoT & 119.34 ± 6.50 & 74 & 26 & 6.8e-8 & 72.3 \\
w/o CoT & 103.34 ± 6.71 & 60 & 40 & 3.3e-1 & 59.6 \\
\bottomrule
\end{tabular}
\caption{Allocated bundle values, statistical significance, and LLM accuracy with and without CoT.
}
\label{tab:allocated_bundle_values}.
\end{table}

\begin{table}[ht]
\centering
\begin{tabular}{l c c c c}
\toprule
\textbf{Noise} & \textbf{Normalized} & \multicolumn{2}{c}{\textbf{\% of Runs}} & \textbf{P-value} \\
\cline{3-4}
\textbf{Mult.}
& \textbf{Value} \(\pm\ CI\)
& \textbf{Better}
& \textbf{Worse}
& \\
\midrule
$0.5$  & \(103.93 \pm 6.76\)  & 52 & 48 & $0.132$ \\
$0.75$ & \(108.66 \pm 7.31\)  & 66 & 34 & $0.013$ \\
$0.9$  & \(106.08 \pm 8.22 \) & 54  & 46   &  $0.079$ \\
\midrule
$1$    & \(119.34 \pm 6.46\) & 74 & 26 & 3.41e-8 \\
\midrule
$1.1$    & \( 113.05 \pm 8.39 \) & 66 & 34 & $0.002$ \\
$1.25$    & \(122.64 \pm 8.12\) & 74 & 26 & 9.37e-7 \\
\bottomrule
\end{tabular}
\caption{Normalized allocated bundle value with our framework as a function of the noise multiplier used. 
Shows are averages over $100$ runs over 100 runs for the default noise level and 50 runs for all others, including $95$\% CIs.
We also show the \% of cases where our framework improves a student's allocation, and the significance of these results.}
\label{tab:noise_ablation}
\end{table}

\subsection{LLM architecture ablation test} \label{sec:llm_architecture_ablation}
In this section, we evaluate our framework's robustness to LLM architectural variations. As detailed in \Cref{sec:llms}, our experiments employ two distinct LLMs: 
the LLM proxy used by our framework to finetune the student's ML model (\Cref{sec:llmproxy}), and the LLM simulating student textual input to our framework (\Cref{sec:simulatedstudent}).

\Cref{tab:llm_architecture_ablation} presents performance results across different LLM architectures for both student simulation and CQ answering. The results demonstrate remarkable consistency: allocation value improves by 19--22\% (with high statistical significance) across all configurations, with student allocations improving in 80--82\% of cases. This robust performance stems from consistently high CQ accuracy (72--75\%) by the LLM proxy across all configurations. As expected, commercial, larger models outperform open-sourced smaller models, though we primarily use the latter for cost efficiency. These findings demonstrate our framework's adaptability to both architectural variations and diverse natural language inputs, with further analysis of CQ accuracy sensitivity presented in \Cref{sec:cq_accuracy_robustness_test}.

\begin{table*}[h]
\centering
\begin{tabular}{l l c c c c}
\toprule
\textbf{LLM} & \textbf{LLM} & \textbf{Allocation Value} & \textbf{\% Better} & \textbf{CQ Accuracy} & \textbf{P-value} \\
\textbf{Surrogate} & \textbf{Proxy} & \(\pm\ CI\) & & \textbf{(\%)} & \\
\cmidrule(lr){1-2} \cmidrule(lr){3-3} \cmidrule(lr){4-4} \cmidrule(lr){5-5} \cmidrule(lr){6-6}
ChatGPT & ChatGPT & \(120.67 \pm 8.44 \) & 82 & $72.11$ & 9.84e-6  \\
LLaMA   & ChatGPT & \(121.72 \pm 8.76 \) & 82 & $73.70$  & 8.38e-6 \\
LLaMA   & LLaMA   & \(119.34 \pm 6.50 \) & 74  & $71.69$  & 6.8e-8 \\
\bottomrule
\end{tabular}
\caption{LLM-powered PE framework performance under different LLM architectures. 
Each setup involves one LLM acting as a surrogate student, simulating the student's natural language input to the mechanism and another LLM proxy answering CQs based on the input. We report the normalized allocated bundle value, the percentage of instances where our framework improves allocative efficiency, the CQ accuracy of the CQ answerer LLM, and the statistical significance of the allocated bundle value results (p-value). 
Shown are averages over at least $50$ runs for each configuration, including 95\% confidence intervals (CIs).
}
\label{tab:llm_architecture_ablation}
\end{table*}

\subsection{Comparing BCE against GCE} \label{sec:gce_vs_bce}
\begin{figure}[h]
\centering
\includegraphics[width=\linewidth]{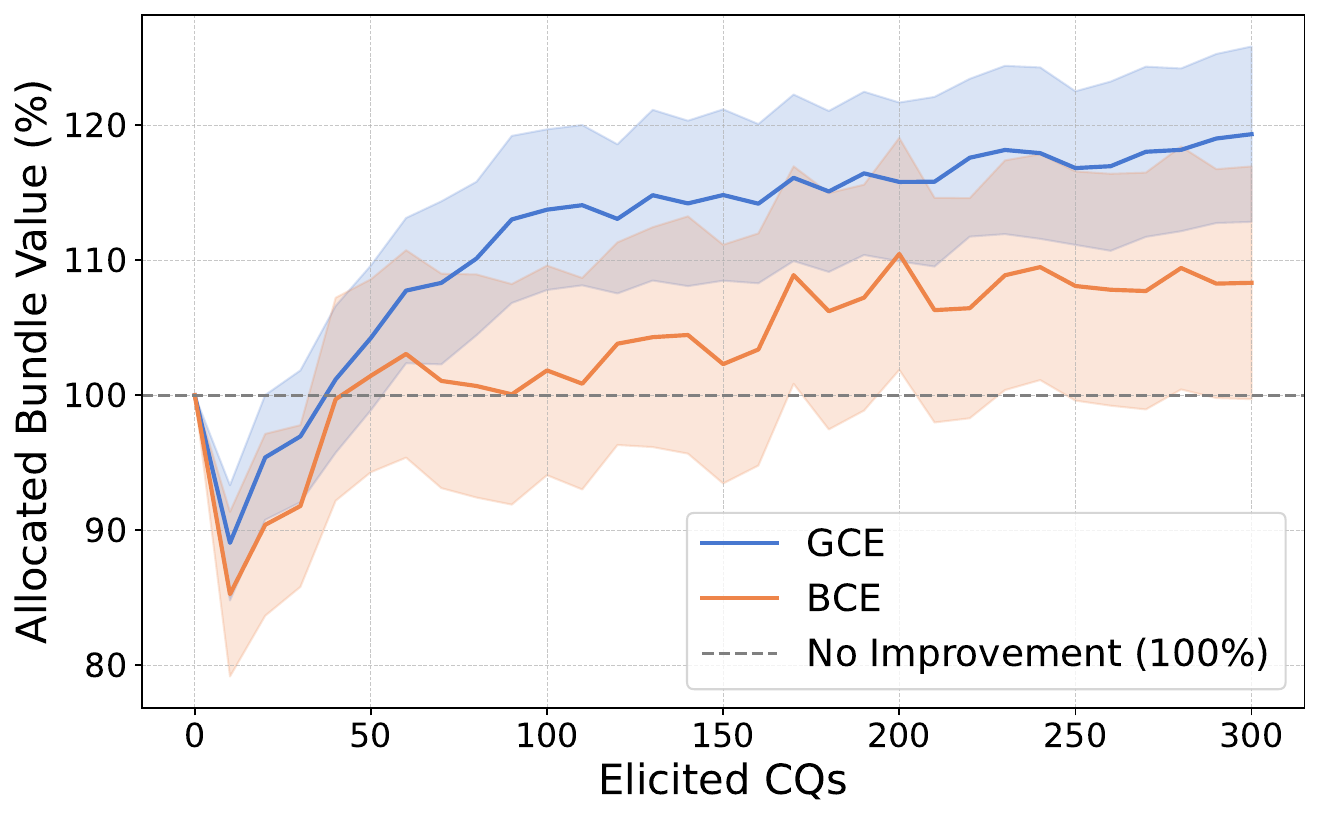}
\vspace{-0.5cm}
\caption{Normalized  allocated bundle value as a function of the number of LLM-answered CQs. 
We compare GCE versus BCE as the training loss for the LLM-answered CQs. 
Shown are averages over 50 instances including 95\% CIs. \es{This figure can also be moved to the appendix and only show the table.}
}
\label{fig:gce_vs_bce}
\end{figure}

In this section, we study the effect of the GCE loss in improving our framework's performance.
We compare allocation value under the default model parametrization (\Cref{sec:HPO}) with a version using standard binary cross-entropy (BCE) loss. Both versions were allocated the same compute time for hyperparameter optimization (HPO), as detailed in \Cref{sec:HPO}.

Similar to our main results in \Cref{sec:experiment_results_efficiency},  in \Cref{fig:allocated_bundle_value} we plot the student's allocated bundle value, as a function of the number of CQs answered by the student's LLM proxy. 
We observe that for any number of CQs, GCE outperforms BCE. 
Importantly, GCE increases allocation value by an average of $19.3$\% using 300 CQs, 
while BCE only achieves an increase of $8.3$\%, and GCE has a much more significant p-value ($6.84 \times 10^{-8}$).

\subsection{Comparing Different Acquisition Functions} \label{sec:acquisition_function_comparison}

In this section, we evaluate the importance of the acquisition function in our framework's performance.
We compare the allocation value across different acquisition functions, using the default parametrization of our framework as described in \Cref{sec:HPO}, while varying only the acquisition function responsible for selecting the  CQs.

\Cref{tab:acquisition_functions} highlights the effectiveness of Double Thompson Sampling, which doubles the improvement in allocation value compared to the second-best acquisition function—from $10.95$\% to $19.34$\%. 
Notably, Double Thompson Sampling is also recognized as a superior acquisition function in RLHF settings \citep{Dwaracherla24Efficient}.
Interestingly, all other acquisition functions tested, including random selection, exhibit nearly identical performance.
\mjc{i commented out fig 5 and the text referencing it.}

\begin{table}[ht]
\centering
\begin{tabular}{l c c c c}
\toprule
\textbf{Acqu.} & \textbf{Normalized} & \multicolumn{2}{c}{\textbf{\% Runs}} & \textbf{P-value} \\
\cline{3-4}
\textbf{Function}
& \textbf{Value} \(\pm\ CI\)
& \textbf{Better}
& \textbf{Worse} \\
\midrule
DoubleTS & 119.34 ± 6.50 & 74 & 26 & 6.8e-8 \\
Infomax & 110.51 ± 8.05 & 66 & 34 & 0.014 \\
Boltzmann & 110.95 ± 8.86 & 60 & 40 & 0.019 \\
Random & 109.86 ± 9.00 & 64 & 36 & 0.037 \\

\bottomrule
\end{tabular}
\caption{Normalized allocated bundle value with our framework for various acquisition functions. 
Shows are averages including $95$\% CIs.
We also show the \% of cases where our framework improves a student's allocation, and the significance of these results.
}
\label{tab:acquisition_functions}
\end{table}

\subsection{LLM Accuracy Robustness Test} \label{sec:cq_accuracy_robustness_test}
In this section, we evaluate our framework's robustness to variations in the accuracy of the proxy LLM. Given the consistently high performance (72--75\%) of tested LLM architectures (\Cref{sec:llm_architecture_ablation}), we employ \emph{simulated LLMs} to explore a broader accuracy range. 
Specifically, for each CQ chosen by the acquisition function, we provide the correct reply with probability equal to the simulated LLM accuracy, assuming i.i.d. mistakes.\footnote{
\Cref{fig:llm_cq_accuracy} motivates this choice, as the LLM accuracy remains roughly constant throughout the process. 
}
Importantly, for this test, we do not perform HPO, but instead use the values for all hyperparameters determined in our default setting.

\Cref{tab:cq_accuracy_ablation} demonstrates our framework's robustness: it continues to improve allocation value even when LLM accuracy drops to 60\%.
This robustness is particularly noteworthy given that all tested LLMs maintain 72--75\% accuracy (\Cref{sec:llm_architecture_ablation}). For comparison, the state-of-the-art mechanism by \citet{soumalias2024machine} loses half its efficacy with just 16\% student error rate in CQ answers.\footnote{In their mechanism, the CQs are answered by real students, and prior work suggests that near-perfect accuracy by real students can generally be assumed.}

\begin{table}[ht]
\centering
\begin{tabular}{l c c c c}
\toprule
\textbf{LLM CQ} & \textbf{Normalized} & \multicolumn{2}{c}{\textbf{\% of Runs}} & \textbf{P-value} \\
\cline{3-4}
\textbf{Accuracy}
& \textbf{Value} \(\pm\ CI\)
& \textbf{Better}
& \textbf{Worse}
& \\
\midrule
$70$\% & \(126.50 \pm 1.79\)  & $84.5$ & $14.6$ &  2.6e-135 \\
$65$\% & \(118.03 \pm 1.85\)  & $71.9$ & $27.6$  & 7.5e-70 \\
$60$\% & \(104.29 \pm 1.75 \) & $52.9$ & $47.1$  & 8.93e-7 \\
$55$\% & \(85.76 \pm 1.65 \)  & $26.1$ & $73.9$ &  $1.0$ \\
\bottomrule
\end{tabular}
\caption{Normalized allocated bundle value with our framework for various levels of \emph{simulated} LLM accuracy. 
Shows are averages over $1000$ runs including $95$\% CIs.
We also show the \% of cases where our framework improves a student's allocation, and the significance of these results.}
\label{tab:cq_accuracy_ablation}
\end{table}

\section{Discussion and Conclusion}

\subsection{Limitations of the Language Model}
Despite the clear benefits of using \textit{large language models (LLMs)} to elicit preferences and answer \textit{comparison queries (CQs)}, we will discuss two important limitations here. First, while the model frequently demonstrates solid general reasoning skills, we have observed gaps in its quantitative reasoning: when comparisons hinge on numerical trade-offs or multi-attribute scoring, the model’s accuracy can suffer. Improving the model’s ability to precisely handle numerical relationships is a key challenge for future research.

Second, although LLMs can effectively understand user preferences in a broad, qualitative sense, their grasp of parsing user requests and assigning ordinal value among diverse items may remain imperfect. 
Nonetheless, we anticipate that continued advancements in language model architectures and fine-tuning techniques will help bridge this gap.  

\subsection{Cost Considerations}
A further practical concern with current LLMs is the \emph{cost} associated with their use. In our experiments, each student required roughly 300 comparison queries, amounting to an average of 0.1 million input tokens and nearly the same amount of output tokens in the CoT response. At scale, these token counts could become prohibitively expensive, especially for advanced models with higher per-token pricing. Luckily, our experiments show that we are not reliant on advanced models, and even using open-sourced models with smaller parameter sizes like LLaMA 3.1 8b, we are still able to achieve significant results.

Encouragingly, both the capabilities and cost structures of LLMs are \emph{rapidly evolving}. For instance, in a pilot study using the GPT-o1 model \cite{zhong2024evaluation}, we observed 
a substantial increase in CQ accuracy to over $80$\%.
However, this experiment was halted due to the model’s high price. 
These developments suggest that, as the LLM ecosystem becomes more competitive, overall costs will likely decrease without sacrificing reasoning quality.

\subsection{Applications Beyond Course Allocation} \label{sec:beyond_course_allocation}

Although our experiments focus on the course allocation domain, the same LLM-based one-shot preference elicitation approach naturally extends to a wide array of settings in which agents have complex preferences over combinatorial outcomes. 
Our framework is broadly applicable to other iterative combinatorial allocation mechanisms that use trained ML models to guide query generation, such as MLCA, ML-CCA, and BOCA \citep{brero2021workingpaper,soumalias2024machine,weissteiner2023bayesian}; 
as detailed in \Cref{sec:experiment_results_learning}, our framework improves the generalization of agents' ML models, particularly in critical regions shown to enhance allocative outcomes \citep{soumalias2024pricesbidsvalueseverything}.

Incorporating LLM proxies into mechanisms offers a unique opportunity to streamline the elicitation process by reducing the time and cognitive effort required from participants, while also leading to more effective and equitable allocations. As LLMs continue to improve in both capability and affordability, they present a promising new approach for making mechanisms more practical and accessible in real-world applications.

\section*{Broader Impact}
\mjc{broader impact doesn't count against page limit}
Our framework improves allocative value, while also reducing cognitive load on the participating agents. 
This has the potential of making complicated mechanisms more efficient, fair and accessible. 
The course assignment mechanism of \citet{budish2017course} was carefully tested before adoption to ensure that it actually improved student welfare and to understand its limitations. The same would be essential for any future improvements to the mechanism, such as our LLM proxy technique. We therefore see the potential for a positive broader impact on this and other problems, and expect that before any mechanism based on our work were deployed, it would in fact be studied carefully.

\bibliography{main}

\begin{thebibliography}{62}
\providecommand{\natexlab}[1]{#1}
\providecommand{\url}[1]{\texttt{#1}}
\expandafter\ifx\csname urlstyle\endcsname\relax
  \providecommand{\doi}[1]{doi: #1}\else
  \providecommand{\doi}{doi: \begingroup \urlstyle{rm}\Url}\fi

\bibitem[Ausubel \& Baranov(2017)Ausubel and Baranov]{ausubel2017practical}
Ausubel, L.~M. and Baranov, O.
\newblock A practical guide to the combinatorial clock auction.
\newblock \emph{Economic Journal}, 127(605):\penalty0 F334--F350, 2017.

\bibitem[Ausubel et~al.(2006)Ausubel, Cramton, and Milgrom]{ausubel2006clock}
Ausubel, L.~M., Cramton, P., and Milgrom, P.
\newblock The clock-proxy auction: A practical combinatorial auction design.
\newblock In Cramton, P., Shoham, Y., and Steinberg, R. (eds.), \emph{Combinatorial Auctions}, pp.\  115--138. MIT Press, 2006.

\bibitem[Bai et~al.(2022)Bai, Jones, Ndousse, Askell, Chen, DasSarma, Drain, Fort, Ganguli, Henighan, Joseph, Kadavath, Kernion, Conerly, {El-Showk}, Elhage, {Hatfield-Dodds}, Hernandez, Hume, Johnston, Kravec, Lovitt, Nanda, Olsson, Amodei, Brown, Clark, McCandlish, Olah, Mann, and Kaplan]{Bai22Training}
Bai, Y., Jones, A., Ndousse, K., Askell, A., Chen, A., DasSarma, N., Drain, D., Fort, S., Ganguli, D., Henighan, T., Joseph, N., Kadavath, S., Kernion, J., Conerly, T., {El-Showk}, S., Elhage, N., {Hatfield-Dodds}, Z., Hernandez, D., Hume, T., Johnston, S., Kravec, S., Lovitt, L., Nanda, N., Olsson, C., Amodei, D., Brown, T., Clark, J., McCandlish, S., Olah, C., Mann, B., and Kaplan, J.
\newblock Training a {{Helpful}} and {{Harmless Assistant}} with {{Reinforcement Learning}} from {{Human Feedback}}, April 2022.

\bibitem[Balcan et~al.(2012)Balcan, Constantin, Iwata, and Wang]{balcan2012learning}
Balcan, M.~F., Constantin, F., Iwata, S., and Wang, L.
\newblock Learning valuation functions.
\newblock In \emph{Conference on Learning Theory}, pp.\  4--1. JMLR Workshop and Conference Proceedings, 2012.

\bibitem[Bergemann et~al.(2024)Bergemann, Bojko, Dütting, Leme, Xu, and Zuo]{bergemann2024datadrivenmd}
Bergemann, D., Bojko, M., Dütting, P., Leme, R.~P., Xu, H., and Zuo, S.
\newblock Data-driven mechanism design: Jointly eliciting preferences and information, 2024.
\newblock URL \url{https://arxiv.org/abs/2412.16132}.

\bibitem[Blum et~al.(2004)Blum, Jackson, Sandholm, and Zinkevich]{blum2004preference}
Blum, A., Jackson, J., Sandholm, T., and Zinkevich, M.
\newblock Preference elicitation and query learning.
\newblock \emph{Journal of Machine Learning Research}, 5:\penalty0 649--667, 2004.

\bibitem[Bradley \& Terry(1952)Bradley and Terry]{bradleyterry1952rank}
Bradley, R.~A. and Terry, M.~E.
\newblock Rank analysis of incomplete block designs: I. the method of paired comparisons.
\newblock \emph{Biometrika}, 39\penalty0 (3/4):\penalty0 324--345, 1952.

\bibitem[Brams \& Straffin~Jr(1979)Brams and Straffin~Jr]{brams1979prisoners}
Brams, S.~J. and Straffin~Jr, P.~D.
\newblock Prisoners' dilemma and professional sports drafts.
\newblock \emph{The American Mathematical Monthly}, 86\penalty0 (2):\penalty0 80--88, 1979.

\bibitem[Brand et~al.(2023)Brand, Israeli, and Ngwe]{brand2023using}
Brand, J., Israeli, A., and Ngwe, D.
\newblock Using gpt for market research.
\newblock \emph{Harvard Business School Marketing Unit Working Paper}, \penalty0 (23-062), 2023.

\bibitem[Brero et~al.(2017)Brero, Lubin, and Seuken]{brero2017probably}
Brero, G., Lubin, B., and Seuken, S.
\newblock Probably approximately efficient combinatorial auctions via machine learning.
\newblock In \emph{Proceedings of the 31st AAAI Conference on Artificial Intelligence}, 2017.

\bibitem[Brero et~al.(2018)Brero, Lubin, and Seuken]{brero2018combinatorial}
Brero, G., Lubin, B., and Seuken, S.
\newblock Combinatorial auctions via machine learning-based preference elicitation.
\newblock In \emph{Proceedings of the 27th International Joint Conference on Artificial Intelligence}, 2018.

\bibitem[Brero et~al.(2021)Brero, Lubin, and Seuken]{brero2021workingpaper}
Brero, G., Lubin, B., and Seuken, S.
\newblock Machine learning-powered iterative combinatorial auctions.
\newblock \emph{arXiv preprint arXiv:1911.08042}, Jan 2021.

\bibitem[Budish(2011)]{budish2011combinatorial}
Budish, E.
\newblock The combinatorial assignment problem: Approximate competitive equilibrium from equal incomes.
\newblock \emph{Journal of Political Economy}, 119\penalty0 (6):\penalty0 1061--1103, 2011.

\bibitem[Budish \& Kessler(2021)Budish and Kessler]{budish2021can}
Budish, E. and Kessler, J.~B.
\newblock Can market participants report their preferences accurately (enough)?
\newblock \emph{Management Science}, 2021.

\bibitem[Budish et~al.(2017)Budish, Cachon, Kessler, and Othman]{budish2017course}
Budish, E., Cachon, G.~P., Kessler, J.~B., and Othman, A.
\newblock Course match: A large-scale implementation of approximate competitive equilibrium from equal incomes for combinatorial allocation.
\newblock \emph{Operations Research}, 65\penalty0 (2):\penalty0 314--336, 2017.

\bibitem[{Columbia Business School}()]{Coursea}
{Columbia Business School}.
\newblock Course {{Match}} ({{Registration}}) {\textbar} {{Students}}.
\newblock https://students.business.columbia.edu/records-registration/course-match-registration.

\bibitem[Conitzer(2007)]{conitzer2007eliciting}
Conitzer, V.
\newblock Eliciting single-peaked preferences using comparison queries.
\newblock In \emph{Proceedings of the 6th international joint conference on Autonomous agents and multiagent systems}, pp.\  1--8, 2007.

\bibitem[Daskalakis et~al.(2024)Daskalakis, Gemp, Jiang, Leme, Papadimitriou, and Piliouras]{daskalakis2024charting}
Daskalakis, C., Gemp, I., Jiang, Y., Leme, R.~P., Papadimitriou, C., and Piliouras, G.
\newblock Charting the shapes of stories with game theory.
\newblock \emph{arXiv preprint arXiv:2412.05747}, 2024.

\bibitem[Deng et~al.(2025)Deng, Wang, and Savani]{deng2025natural}
Deng, S., Wang, Y., and Savani, R.
\newblock From natural language to extensive-form game representations.
\newblock \emph{arXiv preprint arXiv:2501.17282}, 2025.

\bibitem[Dubey et~al.(2024)Dubey, Jauhri, Pandey, Kadian, Al-Dahle, Letman, Mathur, Schelten, Yang, Fan, et~al.]{dubey2024llama}
Dubey, A., Jauhri, A., Pandey, A., Kadian, A., Al-Dahle, A., Letman, A., Mathur, A., Schelten, A., Yang, A., Fan, A., et~al.
\newblock The llama 3 herd of models.
\newblock \emph{arXiv preprint arXiv:2407.21783}, 2024.

\bibitem[D{\"u}tting et~al.(2023)D{\"u}tting, Feng, Narasimhan, Parkes, and Ravindranath]{dutting2019}
D{\"u}tting, P., Feng, Z., Narasimhan, H., Parkes, D., and Ravindranath, S.~S.
\newblock Optimal auctions through deep learning: Advances in differentiable economics.
\newblock \emph{J. ACM}, 2023.
\newblock An earlier version appeared in {\em Proceedings of the 36th International Conference on Machine Learning}. ICML 2019.

\bibitem[D\"{u}tting et~al.(2024)D\"{u}tting, Mirrokni, Paes~Leme, Xu, and Zuo]{duetting2024mdforllms}
D\"{u}tting, P., Mirrokni, V., Paes~Leme, R., Xu, H., and Zuo, S.
\newblock Mechanism design for large language models.
\newblock In \emph{Proceedings of the ACM Web Conference 2024}, WWW '24, pp.\  144–155, New York, NY, USA, 2024. Association for Computing Machinery.
\newblock ISBN 9798400701719.
\newblock \doi{10.1145/3589334.3645511}.
\newblock URL \url{https://doi.org/10.1145/3589334.3645511}.

\bibitem[Dwaracherla et~al.(2024)Dwaracherla, Asghari, Hao, and Roy]{Dwaracherla24Efficient}
Dwaracherla, V., Asghari, S.~M., Hao, B., and Roy, B.~V.
\newblock Efficient exploration for llms, June 2024.

\bibitem[Feng et~al.(2018)Feng, Narasimhan, and Parkes]{feng2018deep}
Feng, Z., Narasimhan, H., and Parkes, D.~C.
\newblock Deep learning for revenue-optimal auctions with budgets.
\newblock In \emph{Proceedings of the 17th International Conference on Autonomous Agents and Multiagent Systems}, pp.\  354--362, 2018.

\bibitem[Fish et~al.(2024)Fish, Gonczarowski, and Shorrer]{fish2024algorithmic}
Fish, S., Gonczarowski, Y.~A., and Shorrer, R.~I.
\newblock Algorithmic collusion by large language models.
\newblock \emph{arXiv preprint arXiv:2404.00806}, 2024.

\bibitem[Fujishima et~al.(1999)Fujishima, Leyton-Brown, and Shoham]{fujishima1999taming}
Fujishima, Y., Leyton-Brown, K., and Shoham, Y.
\newblock Taming the computational complexity of combinatorial auctions: Optimal and approximate approaches.
\newblock In \emph{IJCAI}, volume~99, pp.\  548--553, 1999.

\bibitem[Ghosh et~al.(2015)Ghosh, Manwani, and Sastry]{ghosh2015}
Ghosh, A., Manwani, N., and Sastry, P.
\newblock Making risk minimization tolerant to label noise.
\newblock \emph{Neurocomput.}, 160\penalty0 (C):\penalty0 93–107, July 2015.
\newblock ISSN 0925-2312.
\newblock \doi{10.1016/j.neucom.2014.09.081}.
\newblock URL \url{https://doi.org/10.1016/j.neucom.2014.09.081}.

\bibitem[Golowich et~al.(2018)Golowich, Narasimhan, and Parkes]{golowich2018deep}
Golowich, N., Narasimhan, H., and Parkes, D.~C.
\newblock Deep learning for multi-facility location mechanism design.
\newblock In \emph{Proceedings of the Twenty-seventh International Joint Conference on Artificial Intelligence and the Twenty-third European Conference on Artificial Intelligence}, pp.\  261--267, 2018.

\bibitem[Hajiaghayi et~al.(2024)Hajiaghayi, Lahaie, Rezaei, and Shin]{mohammad2024rag}
Hajiaghayi, M., Lahaie, S., Rezaei, K., and Shin, S.
\newblock Ad auctions for llms via retrieval augmented generation, 2024.
\newblock URL \url{https://arxiv.org/abs/2406.09459}.

\bibitem[Hatfield(2009)]{hatfield2009strategy}
Hatfield, J.~W.
\newblock Strategy-proof, efficient, and nonbossy quota allocations.
\newblock \emph{Social Choice and Welfare}, 33\penalty0 (3):\penalty0 505--515, 2009.

\bibitem[Horton(2023)]{horton2023large}
Horton, J.~J.
\newblock Large language models as simulated economic agents: What can we learn from homo silicus?
\newblock Technical report, National Bureau of Economic Research, 2023.

\bibitem[Huang et~al.(2025)Huang, {Marmolejo-Coss{\'i}o}, Lock, and Parkes]{Huang25Accelerated}
Huang, D., {Marmolejo-Coss{\'i}o}, F., Lock, E., and Parkes, D.
\newblock Accelerated {{Preference Elicitation}} with {{LLM-Based Proxies}}, January 2025.

\bibitem[Kuo et~al.(2020)Kuo, Ostuni, Horishny, Curry, Dooley, Chiang, Goldstein, and Dickerson]{kuo2020proportionnet}
Kuo, K., Ostuni, A., Horishny, E., Curry, M.~J., Dooley, S., Chiang, P.-y., Goldstein, T., and Dickerson, J.~P.
\newblock Proportionnet: Balancing fairness and revenue for auction design with deep learning.
\newblock \emph{arXiv preprint arXiv:2010.06398}, 2020.

\bibitem[Lahaie \& Parkes(2004)Lahaie and Parkes]{lahaie2004applying}
Lahaie, S.~M. and Parkes, D.~C.
\newblock Applying learning algorithms to preference elicitation.
\newblock In \emph{Proceedings of the 5th ACM Conference on Electronic Commerce}, 2004.

\bibitem[Lock et~al.(2022)Lock, Goldberg, and Marmolejo-Coss{\'\i}o]{lock2022learning}
Lock, E., Goldberg, P.~W., and Marmolejo-Coss{\'\i}o, F.
\newblock Learning strong substitutes demand via queries.
\newblock \emph{ACM Transactions on Economics and Computation}, 10\penalty0 (2):\penalty0 1--22, 2022.

\bibitem[Manning et~al.(2024)Manning, Zhu, and Horton]{manning2024automated}
Manning, B.~S., Zhu, K., and Horton, J.~J.
\newblock Automated social science: Language models as scientist and subjects.
\newblock Technical report, National Bureau of Economic Research, 2024.

\bibitem[Mensfelt et~al.(2024{\natexlab{a}})Mensfelt, Stathis, and Trencsenyi]{mensfelt2024autoformalization}
Mensfelt, A., Stathis, K., and Trencsenyi, V.
\newblock Autoformalization of game descriptions using large language models.
\newblock \emph{arXiv preprint arXiv:2409.12300}, 2024{\natexlab{a}}.

\bibitem[Mensfelt et~al.(2024{\natexlab{b}})Mensfelt, Stathis, and Trencsenyi]{mensfelt2024autoformalizing}
Mensfelt, A., Stathis, K., and Trencsenyi, V.
\newblock Autoformalizing and simulating game-theoretic scenarios using llm-augmented agents.
\newblock \emph{arXiv preprint arXiv:2412.08805}, 2024{\natexlab{b}}.

\bibitem[Nisan(2000)]{nisan2000bidding}
Nisan, N.
\newblock Bidding and allocation in combinatorial auctions.
\newblock In \emph{Proceedings of the 2nd ACM Conference on Electronic Commerce}, pp.\  1--12, 2000.

\bibitem[Nisan \& Segal(2006)Nisan and Segal]{nisan2006communication}
Nisan, N. and Segal, I.
\newblock The communication requirements of efficient allocations and supporting prices.
\newblock \emph{Journal of Economic Theory}, 129\penalty0 (1):\penalty0 192--224, 2006.

\bibitem[P{\'a}pai(2001)]{papai2001strategyproof}
P{\'a}pai, S.
\newblock Strategyproof and nonbossy multiple assignments.
\newblock \emph{Journal of Public Economic Theory}, 3\penalty0 (3):\penalty0 257--271, 2001.

\bibitem[Park et~al.(2024)Park, Zou, Shaw, Hill, Cai, Morris, Willer, Liang, and Bernstein]{park2024generative}
Park, J.~S., Zou, C.~Q., Shaw, A., Hill, B.~M., Cai, C., Morris, M.~R., Willer, R., Liang, P., and Bernstein, M.~S.
\newblock Generative agent simulations of 1,000 people.
\newblock \emph{arXiv preprint arXiv:2411.10109}, 2024.

\bibitem[Ravindranath et~al.(2021)Ravindranath, Feng, Li, Ma, Kominers, and Parkes]{ravindranath2021deep}
Ravindranath, S.~S., Feng, Z., Li, S., Ma, J., Kominers, S.~D., and Parkes, D.~C.
\newblock Deep learning for two-sided matching.
\newblock \emph{arXiv preprint arXiv:2107.03427}, 2021.

\bibitem[Ravindranath et~al.(2023)Ravindranath, Jiang, and Parkes]{ravindranathdata}
Ravindranath, S.~S., Jiang, Y., and Parkes, D.~C.
\newblock Data market design through deep learning.
\newblock In Oh, A., Naumann, T., Globerson, A., Saenko, K., Hardt, M., and Levine, S. (eds.), \emph{Advances in Neural Information Processing Systems}, volume~36, pp.\  6662--6689. Curran Associates, Inc., 2023.
\newblock URL \url{https://proceedings.neurips.cc/paper_files/paper/2023/file/1577ea3eaf8dacb99f64e4496c3ecddf-Paper-Conference.pdf}.

\bibitem[Sandholm \& Suri(2000)Sandholm and Suri]{sandholm2000improved}
Sandholm, T. and Suri, S.
\newblock Improved algorithms for optimal winner determination in combinatorial auctions and generalizations.
\newblock In \emph{AAAI/IAAI}, pp.\  90--97, 2000.

\bibitem[S{\"o}nmez \& {\"U}nver(2010)S{\"o}nmez and {\"U}nver]{sonmez2010course}
S{\"o}nmez, T. and {\"U}nver, M.~U.
\newblock Course bidding at business schools.
\newblock \emph{International Economic Review}, 51\penalty0 (1):\penalty0 99--123, 2010.

\bibitem[Soumalias et~al.(2024{\natexlab{a}})Soumalias, Curry, and Seuken]{soumalias2024truthful}
Soumalias, E., Curry, M.~J., and Seuken, S.
\newblock Truthful aggregation of llms with an application to online advertising, 2024{\natexlab{a}}.
\newblock URL \url{https://arxiv.org/abs/2405.05905}.

\bibitem[Soumalias et~al.(2024{\natexlab{b}})Soumalias, Zamanlooy, Weissteiner, and Seuken]{soumalias2024machine}
Soumalias, E., Zamanlooy, B., Weissteiner, J., and Seuken, S.
\newblock Machine learning-powered course allocation.
\newblock In \emph{Proceedings of the 25th ACM Conference on Economics and Computation}, EC '24, pp.\  1099, New York, NY, USA, 2024{\natexlab{b}}. Association for Computing Machinery.
\newblock ISBN 9798400707049.
\newblock \doi{10.1145/3670865.3673573}.
\newblock URL \url{https://doi.org/10.1145/3670865.3673573}.

\bibitem[Soumalias et~al.(2025)Soumalias, Heiss, Weissteiner, and Seuken]{soumalias2024pricesbidsvalueseverything}
Soumalias, E., Heiss, J., Weissteiner, J., and Seuken, S.
\newblock Prices, bids, values: One ml-powered combinatorial auction to rule them all, 2025.
\newblock URL \url{https://arxiv.org/abs/2411.09355}.

\bibitem[Soumalias et~al.(2024{\natexlab{c}})Soumalias, Weissteiner, Heiss, and Seuken]{Soumalias2024MLCCA}
Soumalias, E.~N., Weissteiner, J., Heiss, J., and Seuken, S.
\newblock Machine learning-powered combinatorial clock auction.
\newblock \emph{Proceedings of the AAAI Conference on Artificial Intelligence}, 38\penalty0 (9):\penalty0 9891--9900, Mar. 2024{\natexlab{c}}.
\newblock \doi{10.1609/aaai.v38i9.28850}.
\newblock URL \url{https://ojs.aaai.org/index.php/AAAI/article/view/28850}.

\bibitem[Tacchetti et~al.(2019)Tacchetti, Strouse, Garnelo, Graepel, and Bachrach]{tacchetti2019neural}
Tacchetti, A., Strouse, D., Garnelo, M., Graepel, T., and Bachrach, Y.
\newblock A neural architecture for designing truthful and efficient auctions.
\newblock \emph{arXiv preprint arXiv:1907.05181}, 2019.

\bibitem[{University of Pennsylvania}()]{Course}
{University of Pennsylvania}.
\newblock Course {{Match}}.

\bibitem[Wang et~al.(2024)Wang, Jiang, and Parkes]{wang2024gemnet}
Wang, T., Jiang, Y., and Parkes, D.~C.
\newblock Gemnet: Menu-based, strategy-proof multi-bidder auctions through deep learning.
\newblock In \emph{Proceedings of the 25th ACM Conference on Economics and Computation}, EC '24, pp.\  1100, New York, NY, USA, 2024. Association for Computing Machinery.
\newblock ISBN 9798400707049.
\newblock \doi{10.1145/3670865.3673454}.
\newblock URL \url{https://doi.org/10.1145/3670865.3673454}.

\bibitem[Wei et~al.(2022)Wei, Wang, Schuurmans, Bosma, Xia, Chi, Le, Zhou, et~al.]{wei2022chain}
Wei, J., Wang, X., Schuurmans, D., Bosma, M., Xia, F., Chi, E., Le, Q.~V., Zhou, D., et~al.
\newblock Chain-of-thought prompting elicits reasoning in large language models.
\newblock \emph{Advances in neural information processing systems}, 35:\penalty0 24824--24837, 2022.

\bibitem[Weissteiner \& Seuken(2020)Weissteiner and Seuken]{weissteiner2020deep}
Weissteiner, J. and Seuken, S.
\newblock Deep learning—powered iterative combinatorial auctions.
\newblock \emph{Proceedings of the AAAI Conference on Artificial Intelligence}, 34\penalty0 (02):\penalty0 2284--2293, Apr. 2020.
\newblock \doi{10.1609/aaai.v34i02.5606}.
\newblock URL \url{https://ojs.aaai.org/index.php/AAAI/article/view/5606}.

\bibitem[Weissteiner et~al.(2022{\natexlab{a}})Weissteiner, Heiss, Siems, and Seuken]{weissteiner2022monotone}
Weissteiner, J., Heiss, J., Siems, J., and Seuken, S.
\newblock Monotone-value neural networks: Exploiting preference monotonicity in combinatorial assignment.
\newblock In \emph{Proceedings of the Thirty-First International Joint Conference on Artificial Intelligence, {IJCAI-22}}, pp.\  541--548. International Joint Conferences on Artificial Intelligence Organization, 7 2022{\natexlab{a}}.
\newblock \doi{10.24963/ijcai.2022/77}.
\newblock URL \url{https://doi.org/10.24963/ijcai.2022/77}.
\newblock Main Track.

\bibitem[Weissteiner et~al.(2022{\natexlab{b}})Weissteiner, Wendler, Seuken, Lubin, and Püschel]{weissteiner2022fourier}
Weissteiner, J., Wendler, C., Seuken, S., Lubin, B., and Püschel, M.
\newblock Fourier analysis-based iterative combinatorial auctions.
\newblock In \emph{Proceedings of the Thirty-First International Joint Conference on Artificial Intelligence, {IJCAI-22}}, pp.\  549--556. International Joint Conferences on Artificial Intelligence Organization, 7 2022{\natexlab{b}}.
\newblock \doi{10.24963/ijcai.2022/78}.
\newblock URL \url{https://doi.org/10.24963/ijcai.2022/78}.
\newblock Main Track.

\bibitem[Weissteiner et~al.(2023)Weissteiner, Heiss, Siems, and Seuken]{weissteiner2023bayesian}
Weissteiner, J., Heiss, J., Siems, J., and Seuken, S.
\newblock Bayesian optimization-based combinatorial assignment.
\newblock \emph{Proceedings of the AAAI Conference on Artificial Intelligence}, 37, 2023.

\bibitem[Zhang \& Conitzer(2020)Zhang and Conitzer]{zhang2020learning}
Zhang, H. and Conitzer, V.
\newblock Learning the valuations of a $ k $-demand agent.
\newblock In \emph{International Conference on Machine Learning}, pp.\  11066--11075. PMLR, 2020.

\bibitem[Zhang \& Sabuncu(2018)Zhang and Sabuncu]{Zhang18Generalized}
Zhang, Z. and Sabuncu, M.
\newblock Generalized {{Cross Entropy Loss}} for {{Training Deep Neural Networks}} with {{Noisy Labels}}.
\newblock In \emph{Advances in {{Neural Information Processing Systems}}}, volume~31. Curran Associates, Inc., 2018.

\bibitem[Zhong et~al.(2024)Zhong, Liu, Pan, Zhang, Zhou, Liang, Wu, Lyu, Shu, Yu, et~al.]{zhong2024evaluation}
Zhong, T., Liu, Z., Pan, Y., Zhang, Y., Zhou, Y., Liang, S., Wu, Z., Lyu, Y., Shu, P., Yu, X., et~al.
\newblock Evaluation of openai o1: Opportunities and challenges of agi.
\newblock \emph{arXiv preprint arXiv:2409.18486}, 2024.

\bibitem[Zhu et~al.(2024)Zhu, Horton, Jiang, Parkes, and Shah]{zhuevidence}
Zhu, K., Horton, J.~J., Jiang, Y., Parkes, D.~C., and Shah, A.~V.
\newblock Evidence from the synthetic laboratory: Language models as auction participants, 2024.
\newblock Presented at the NeurIPS 2024 Workshop on Behavioral Machine Learning.

\end{thebibliography}
\bibliographystyle{icml2025}

\newpage
\appendix
\onecolumn
\section{More Prior Work}
\label{app:morepriorwork}
\paragraph{Preference elicitation and learning theory} %
\citet{blum2004preference} and \citet{lahaie2004applying} frame the preference elicitation problem in terms of learning theory, adapting tools for learning boolean functions from few queries to the problem of learning valuation functions. Many other works build in the same direction~\cite{conitzer2007eliciting,balcan2012learning,zhang2020learning,lock2022learning}.

\paragraph{Iterative auctions with machine learning}
Some auctions ask for bids in multiple rounds, such as the \emph{combinatorial clock auction (CCA)}~\cite{ausubel2006clock}, which generated over \emph{USD $20$ billion} in revenue from high-stakes spectrum auctions between $2012$ and $2014$~\cite{ausubel2017practical}.

More sophisticated iterative mechanisms attempt to make optimal queries to maximize allocative efficiency while minimizing the reporting burden on participants. 
\citet{brero2017probably,brero2018combinatorial} introduced this technique, using support vector regression to model bidders' value functions. 
\citet{brero2021workingpaper} generalized these techniques and applied them to a realistic combinatorial domain motivated by spectrum auctions, and showed better allocative efficiency than the CCA.

Other authors have improved on this work by using neural networks~\cite{weissteiner2020deep}, by learning value functions in the Fourier domain~\cite{weissteiner2022fourier}, by imposing monotonicity constraints on the neural network value functions~\cite{weissteiner2022monotone}, and by using estimations of uncertainty to choose queries in an active learning style~\cite{weissteiner2023bayesian}.
More recently, this framework has been adapted to use demand queries instead of value queries, similar to the interaction paradigm of the CCA \citep{Soumalias2024MLCCA}. 
\citet{soumalias2024pricesbidsvalueseverything} showed how both query types can be leveraged to create the most efficient ICA to date. 

\paragraph{Machine learning for mechanism design}
Machine learning, particularly deep learning, has been increasingly applied to mechanism design. \citet{dutting2019} introduced deep neural networks for auction design, improving representational flexibility. This approach, termed differentiable economics, has since been extended to budget-constrained bidders~\citep{feng2018deep}, payment minimization~\citep{tacchetti2019neural}, multi-facility location~\citep{golowich2018deep}, fairness-revenue trade-offs~\citep{kuo2020proportionnet}, two-sided matching~\citep{ravindranath2021deep}, and data markets~\citep{ravindranathdata}. Structural advancements, such as \citet{wang2024gemnet}, achieve exact strategy-proofness rather than approximate incentive compatibility. Unlike the present work, this line of work focuses on searching through the space of mechanisms, rather than using ML for preference elicitation in a fixed mechanism. Moreover, these methods primarily address settings with additive and unit-demand valuations, leaving problems with combinatorial allocations, such as the course allocation problem, largely unaddressed.

\paragraph{LLMs for Translating Natural-Language Descriptions to Formal Game Models} A related thread of research explores how LLMs can interpret free-form textual narratives and translate them into formal, game-theoretic structures \citep{daskalakis2024charting,mensfelt2024autoformalization, mensfelt2024autoformalizing, deng2025natural}. \citet{mensfelt2024autoformalization,mensfelt2024autoformalizing} map textual inputs to logic-based representations of smaller or simultaneous-move games, \citet{daskalakis2024charting} address more complex, story-like sequences by constructing extensive-form games that allow subsequent equilibrium analysis, and \citet{deng2025natural} translate game descriptions in
natural language into game-theoretic extensive-form representations. While these works focus on different problem settings than course allocation, they illustrate broader opportunities for harnessing LLMs to extract actionable, structured information from informal descriptions—an approach that aligns with our use of free-text inputs to inform combinatorial assignments.

\section{Details from \Cref{sec:PE_framework}} \label{sec:PE_framework_details}
\subsection{Ommited Proofs}
\begin{proof}[\Cref{prop:true_minimizer} Proof.]
\citet{ghosh2015} showed that, under uniform label noise with a noise rate $\eta \leq \frac{c - 1}{c}$, where $c$ is the number of classes, the minimizer of a symmetric loss function over a noiseless dataset is also the minimizer of the same loss function over a noisy dataset. Since $c = 2$ in our case, label noise is inherently uniform and this requirement becomes $\eta \leq 0.5$.

The GCE loss is known to be symmetric~\cite{Zhang18Generalized}, so the student's true valuation function minimizes the GCE loss on the noisy dataset generated by the LLM responses.
\end{proof}

\subsection{Mixed Training Algorithm and Connection to the Bradley-Terry Model} \label{sec:training_alg}
In this section, we reprint the training algorithm of \citet{soumalias2024machine} for integrating GUI reports (regression data) and CQs (ordinal data) into the training of MVNNs.

\begin{algorithm}[t!]
\caption{Mixed training for regression model $\mathcal{M}$}
\label{alg_mixed_training}
\textbf{Input}: $\{X_{reg}, y_{reg}\}$, $\{X_{class}, y_{class}\}$ \\
\textbf{Parameters}: epochs $t_{reg}$, learning rate $\eta_{reg}$, regularization parameter $\lambda_{reg}$, epochs $t_{class}$, learning rate $\eta_{class}$, regularization parameter $\lambda_{class}$ \\
\textbf{Output}: Parameters of trained ML model $\mathcal{M}$
\begin{algorithmic}[1] %
\STATE $\theta_0 \leftarrow{}$ initialize parameters of the ML model $\mathcal{M}$ \label{alg_init}
\FOR{$i = 1$ to $t_{reg}$} 
    \STATE $loss_{reg} \leftarrow 0$
    \FOR{each $(x,y)$ in $\{X_{reg}, y_{reg}\}$}
        \STATE $\hat{y} \leftarrow \mathcal{M}^{\theta_{i-1}}(x)$
        \STATE $loss_{reg} \leftarrow loss_{reg} + l_{reg}(y, \hat{y}) + \lambda_{reg} \sum{\theta_{i-1}^2}$ \label{mixed_training:reg_loss}
    \ENDFOR
    \STATE $\theta_i \leftarrow \operatorname{ADAM}(\theta_{i-1}, loss_{reg}, \eta_{reg})$ \label{alg_mixed_training_adam1}
\ENDFOR \label{alg_reg_end}
\FOR{$i = t_{reg} + 1$ to $t_{reg} + t_{class}$} \label{alg_class_start}
    \STATE $loss_{class} \leftarrow 0$
    \FOR{each $((x_1, x_2),y)$ in $\{X_{class}, y_{class}\}$}
        \STATE $\hat{y}_1 \leftarrow \mathcal{M}^{\theta_{i-1}}(x_1)$
        \STATE $\hat{y}_2 \leftarrow \mathcal{M}^{\theta_{i-1}}(x_2)$
        \STATE $\hat{y} \leftarrow \frac{1}{1 + e^{-(\hat{y}_1 - \hat{y}_2)}}$ \label{mixed_training:probability}
        \STATE $loss_{class} \leftarrow loss_{class} + l_{class}(y, \hat{y}) + \lambda_{class} \sum{\theta_{i-1}^2}$ \label{mixed_training:classification_loss}
    \ENDFOR
    \STATE $\theta_i \leftarrow \operatorname{ADAM}(\theta_{i-1}, loss_{class}, \eta_{class})$ \label{alg_mixed_training_adam2}
\ENDFOR \label{alg_class_end}
\STATE \textbf{return} $\theta_{t_{reg} + t_{class}}$ \label{alg_return}
\end{algorithmic}
\end{algorithm}

The core idea of \Cref{alg_mixed_training} is to first train the ML model on the student's GUI reports, and then finetune that training on her CQ responses. 
During the regression phase, the algorithm uses a typical regression loss, which compares the real-valued output of the model, $\mathcal{M}(\cdot)$, to the inferred value from the student's GUI input for a particular schedule (Line~\ref{mixed_training:reg_loss}).

In contrast, for CQs, we transform the model’s real-valued outputs for two schedules, $x_1$ and $x_2$, using the sigmoid function $f(x) = \frac{1}{1+e^{-x}}$. This yields a predicted probability within the $[0,1]$ range, where $\frac{1}{1 + e^{-(\hat{y}_1 - \hat{y}_2)}}$ represents the likelihood that the student finds schedule $x_1$ preferable to schedule $x_2$ (Line \ref{mixed_training:probability}).
Note that this predicted probability is exactly the one under the Bradley-Terry model \citep{bradleyterry1952rank}.
This predicted probability is then compared with the actual binary preference expressed by the student: $1$ if she preferred $x_1$, and $0$ otherwise (Line \ref{mixed_training:classification_loss}).

The training process runs for a total of $t_{reg} + t_{class}$ epochs, after which the final set of parameters is returned (Line~\ref{alg_return}).

\subsection{Monotone Value Neural Networks (MVNNs)}\label{sec:MVNNs}
In this section, we reporting the original definition of MVNNs introduced in  \citet{weissteiner2022monotone}:

\begin{definition}[MVNN]\label{def:MVNN}
		An MVNN $\mathcal{M}_i:\mathcal{X} \to \mathbb{R}_{\ge0}$  for agent $i\in N$ is defined as
            {\small
		\begin{equation}\label{eq:MVNN}
		\mathcal{M}_i (x) \coloneqq  W^{i,K_i}\varphi_{0,t^{i, K_i-1}}\left(\ldots\varphi_{0,t^{i, 1}}(W^{i,1}\left(D x\right)+b^{i,1})\ldots\right)
		\end{equation}
             }
		\begin{itemize}
		\item $K_i+2\in\mathbb{N}$ is the number of layers ($K_i$ hidden layers),
		\item $\{\varphi_{0,t^{i, k}}{}\}_{k=1}^{K_i-1}$ are the MVNN-specific activation functions with cutoff $t^{i, k}>0$, called \emph{bounded ReLU (bReLU)}:
		\begin{align}\label{itm:MVNNactivation}
		\varphi_{0,t^{i, k}}(\cdot)\coloneqq\min(t^{i, k}, \max(0,\cdot))
		\end{align}
		\item $W^i\coloneqq (W^{i,k})_{k=1}^{K_i}$ with $W^{i,k}\ge0$ and $b^i\coloneqq (b^{i,k})_{k=1}^{K_i-1}$ with $b^{i,k}\le0$ are the \emph{non-negative} weights and \emph{non-positive} biases of dimensions $d^{i,k}\times d^{i,k-1}$ and $d^{i,k}$, whose parameters are stored in $\theta=(W^i,b^i)$.
        \item\label{itm:D} $ D \coloneqq \text{diag} \left(\frac{1}{c_1},\ldots,\frac{1}{c_m}\right)$ is the linear normalization layer that ensures $D x\in [0,1]$ and is not trainable.
		\end{itemize}
\end{definition}

\begin{remark}[Initiaization]
    We always use the initialization scheme from \citet[Section~3.2 and Appednix~E]{weissteiner2023bayesian}, which offers crucial advantages over standard initialization schemes as discussed in \citet[Section~3.2 and Appednix~E]{weissteiner2023bayesian}.
\end{remark}

\subsection{Epistemic MVNNs (eMVNNs)} \label{sec:eMVNNs}
In this section, we introduce  \textit{Epistemic MVNNs (eMVNNs)}, which extend the standard MVNN architecture by incorporating an ensemble-based method to estimate epistemic uncertainty. Epistemic uncertainty reflects the model's lack of knowledge about certain regions of the input space and is crucial for guiding active learning and preference elicitation tasks. We formally define Epistemic MVNNs below.

\begin{definition}[Epistemic MVNNs]\label{def:EpistemicMVNN}
    An \emph{Epistemic MVNN} for agent $i \in N$ is defined as an ensemble of $M$ independently initialized MVNNs, denoted by 
    $\{\mathcal{M}_i^{(j)} : \mathcal{X} \to \mathbb{R}_{\ge 0}\}_{j=1}^{M}$, where each network is defined as in \Cref{def:MVNN}.
    For a given bundle $x \in \mathcal{X}$, the prediction distribution is represented by the set of outputs from the ensemble:
    \[
        \mathcal{M}_i^{E}(x) = \{\mathcal{M}_i^{(j)}(x) : j = 1, \ldots, M\}.
    \]

    The \emph{mean prediction} and \emph{epistemic uncertainty} are defined as follows:
    \begin{enumerate}
        \item \textbf{Mean prediction:}
        \[
            \hat{y}_i(x) = \frac{1}{M} \sum_{j=1}^{M} \mathcal{M}_i^{(j)}(x),
        \]

        \item \textbf{Epistemic uncertainty:}
        \[
            \sigma_i^2(x) = \frac{1}{M} \sum_{j=1}^{M} \left( \mathcal{M}_i^{(j)}(x) - \hat{y}_i(x) \right)^2.
        \]
    \end{enumerate}
    Here, $\sigma_i(x)$ denotes the standard deviation of the ensemble's predictions and serves as a measure of epistemic uncertainty.
\end{definition}

\begin{remark}
Motivated by the connection of our problem to RLHF highlighted in \Cref{sec:PE_framework}, 
we use $M = 10$ following \citep{Dwaracherla24Efficient}.
\end{remark}

\begin{remark}
    Note that the mean prediction of an eMVNN preserves the structural properties of MVNNs. Specifically:
    \begin{itemize}
        \item \textbf{Zero value for the empty bundle:} Since each MVNN in the ensemble is designed to map the empty bundle to zero, the mean prediction, as a convex combination of these outputs, also assigns a zero value to the empty bundle.
        
        \item \textbf{Monotonicity:} Each MVNN is a monotone function by construction, as its weights are constrained to be non-negative. The mean of monotone functions is itself monotone, ensuring that the mean prediction of the eMVNN is also monotone.
    \end{itemize}
\end{remark}

\newpage
\clearpage

\section{Details of LLM Proxies}

\subsection{Accuracy of LLM CQs over rounds}
\label{app:cqaccuracy}
\begin{figure}[h]
    \centering
    \includegraphics[width=0.9\linewidth]{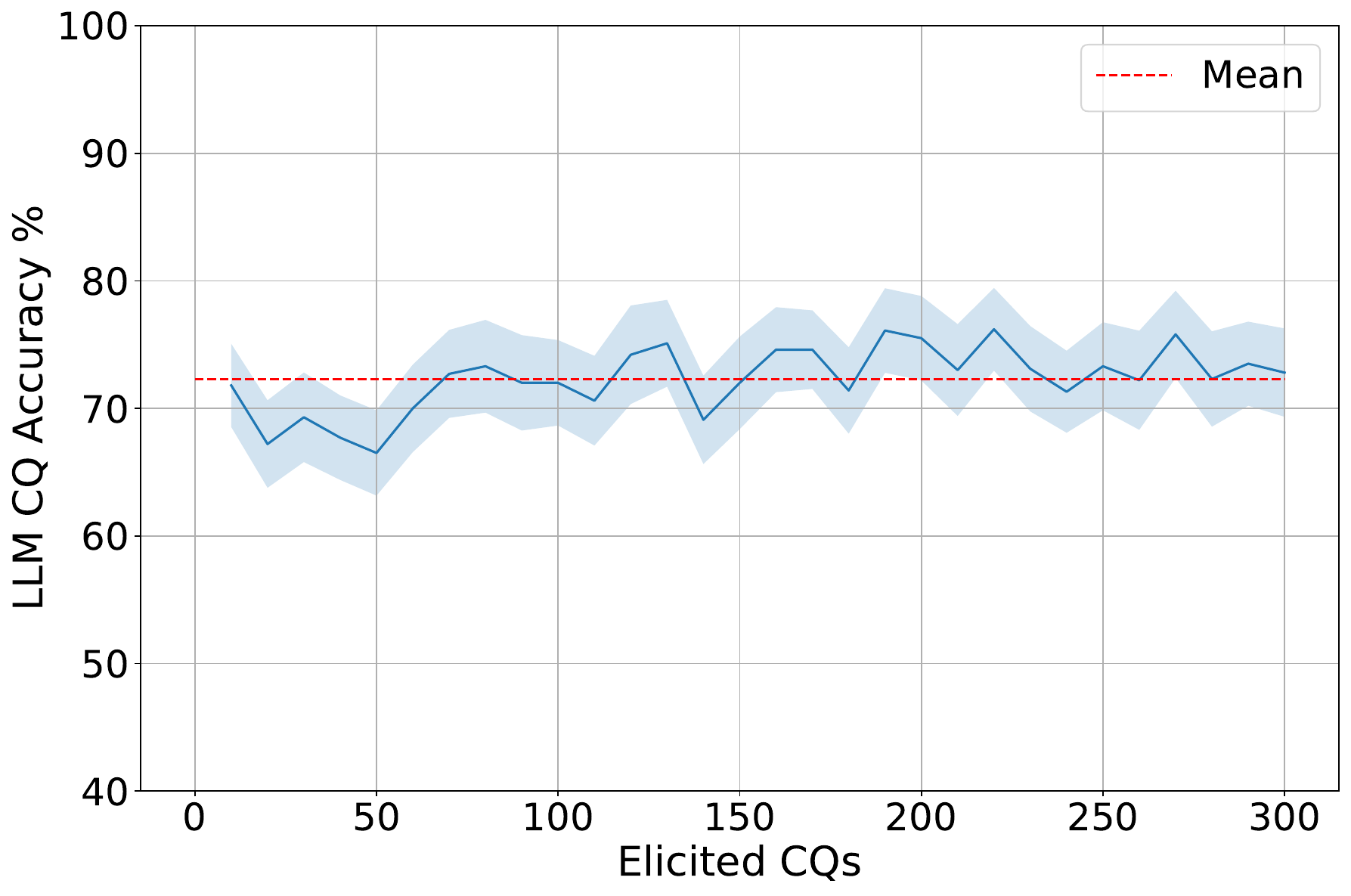}
    \vspace{-0.5cm}
    \caption{LLM CQ accuracy (evaluated using LLaMA 3.1 as the language model) as a function of the number CQs already answered by the LLM. 
    Note that later CQs are expected to be both more informative and harder to answer, as they are generated based on an acquisition function that takes into account both the current ML model, and its epistemic uncertainty.  Nonetheless, accuracy remains high even with many queries.
    Shown are averages over 100 instances including 95\% CIs. 
    }
    \label{fig:llm_cq_accuracy}
\end{figure}

{In \Cref{fig:llm_cq_accuracy}, we observe that the LLM proxy accuracy remains relatively constant throught the ML model's finetuning on CQs. 
Note that CQs are not chosen independently; instead, queries are chosen based on acquisition functions \citep{Dwaracherla24Efficient} that take into account both the student's ML model and its epistemic uncertainty. 
Thus, later queries are expected to be more informative and harder to answer since the student's value model has been already improved based on the previous queries.}

\subsection{Student Preference Conversion Prompt }
\label{appendix:prompt}

\begin{lstlisting}[basicstyle=\ttfamily\small, breaklines=true]
Please act as a student describing their course preferences for the upcoming semester. Write a detailed, first-person paragraph about your preferences based on the following information:

Course Preferences (ordered from highest to lowest value within each tier):

High Priority Courses:
{format_tier_courses(high_pref)}

Medium Priority Courses:
{format_tier_courses(medium_pref)}

Lower Priority Courses:
{format_tier_courses(low_pref)}

Course Relationships:
Overlapping Content (Substitutes) - These are courses that HARMS YOU when taken together compared to taking only one, and the more the worse:
{chr(10).join(f"- {group}" for group in substitute_groups)}

Complementary Courses (Complements) - These are courses that BENEFITS YOU HARMS YOU when taken together compared to taking only one, and the more the better:
{chr(10).join(f"- {group}" for group in complement_groups)}

Additional Constraints:
- Budget Constraint: {student.budget:.2f}
{f"- Time Gap Penalty: {student.timegap_penalty}" if student.timegap_penalty != 0 else ""}
{f"- Overload Penalty: {student.overload_penalty}" if student.overload_penalty != 0 else ""}

Please write a natural, detailed explanation of these preferences as if you were the student. Include:
1. Your strongest course interests, explaining them in order of preference within each priority tier
2. How you're thinking about course combinations, discussing specific synergies and overlaps:
   - When describing overlapping courses, explain how much the overlap affects your interest
   - When describing complementary courses, explain how much additional value you see in taking them together
3. Your overall strategy for course selection, considering both your budget constraints and the strength of course relationships
4. Any specific scheduling or workload considerations

Keep the tone conversational and authentic to how a student would describe their course preferences. Make sure to reference both your relative preferences within each tier and the specific impacts of course combinations on your overall academic plan. Your response should be three paragraph: the first paragraph should list all the top and medium tier courses you want to pick, and any courses that comes with each that might be complementary (good) or subtitutes (bad). The second paragraph should clearly and detailedly state ALL OF THE bundles of courses that are complements and bundles of courses that are substitutes for you; for each, explain how much it hurts you when you take different numbers of courses from that bundle. You should cover all the complement bundles and substitute bundles given to you - leave nothing out. The thrid paragraph concludes and highlights other concerns. Aim for qualitatively detailed description and avoid saying exact numerical values in the output (e.g. it's fine to say 'taking x and x together is suboptimal, taking x, x and x together even more suboptimal, and taking x,x,x, and x together should really be avoided, (so on so forth until the size of the complement/substitute set).', but not fine to say 'My value for course 16 is 102.45', or 'x and x together decreases my utility by 43%
"""
\end{lstlisting}

\subsection{Comparison Query Prompt }
\label{appendix:cqprompt}

\begin{lstlisting}[basicstyle=\ttfamily\small, breaklines=true]

Based on these student preferences:

{student_preferences_text}

Compare:
Bundle A: {format_bundle(bundle1)}
Bundle B: {format_bundle(bundle2)}
to choose the better bundle.

Please ignore budget constraint if it's mentioned in the preference - pretend it doesn't exist.

Your response must use these EXACT tags below, and ONLY include the tags, end your response after that. The text between tags should be concise.

'''
<PREFERENCES>
Bundle A: [First recall the courses in Bundle, then list matching preferences, e.g. Bundle A contains Courses X, X, (list all courses in Bundle A). Course X is high preference, Course X is mid preference, Course X is low preference]
Bundle B: [First recall the courses in Bundle, then list matching preferences, e.g. Bundle B contains Courses X, X, (list all courses in Bundle B). Course X is high preference, Course X is mid preference, Course X is low preference]
</PREFERENCES>

<COMPLEMENTS>
Bundle A: [First recall the courses in Bundle, then list complementary relationships with magnitudes or "None", e.g. Bundle A contains Courses X, X, (list all courses in Bundle A). Course X and Course X are complements which helps moderately when taken together, Course X, Course X, and Course X are complements and helps significantly when taken together]
Bundle B: [First recall the courses in Bundle, then list complementary relationships with magnitudes or "None", e.g. Bundle B contains Courses X, X, (list all courses in Bundle B). Course X and Course X are complements which helps moderately when taken together, Course X, Course X, and Course X are complements and helps significantly when taken together]
</COMPLEMENTS>

<SUBSTITUTES>
Bundle A: [First recall the courses in Bundle, then list substitute relationships with magnitudes or "None", e.g. Bundle A contains Courses X, X, (list all courses in Bundle A). Course X and Course X are substitutes which harms moderately when taken together, Course X, Course X, and Course X are substitutes and harms significantly when taken together]
Bundle B: [First recall the courses in Bundle, then list substitute relationships with magnitudes or "None", e.g. Bundle B contains Courses X, X, (list all courses in Bundle B). Course X and Course X are substitutes which harms moderately when taken together, Course X, Course X, and Course X are substitutes and harms significantly when taken together]
</SUBSTITUTES>

<REASONING>
[Provide your concise reasoning in a few sentences, e.g. From the above, in terms of preferences, Bundle X is better. In terms of the presence and magnitude of complements, Bundle X is better. In terms of magnitude and precense of substitutes, bundle X is better. Considering the tradeoffs, Bundle X is better.]
</REASONING>

<CHOICE>Bundle X</CHOICE>
'''
\end{lstlisting}

\subsection{Example of A Synthetic Student}
\label{appendix:examplestudent}

\begin{lstlisting} [basicstyle=\ttfamily\small, breaklines=true]
Course Preferences (ordered from highest to lowest value within each tier):

High Priority Courses:
   1. Course 20 (value: 117.96)
   2. Course 19 (value: 115.30)

Medium Priority Courses:


Lower Priority Courses:
   1. Course 21 (value: 56.26)
   2. Course 7 (value: 55.99)
   3. Course 14 (value: 55.73)
   4. Course 17 (value: 55.71)
   5. Course 25 (value: 54.02)
   6. Course 2 (value: 52.66)
   7. Course 22 (value: 52.37)
   8. Course 3 (value: 52.07)
   9. Course 6 (value: 51.39)
   10. Course 8 (value: 50.80)
   11. Course 16 (value: 50.80)
   12. Course 1 (value: 49.36)
   13. Course 9 (value: 46.22)
   14. Course 13 (value: 45.73)
   15. Course 23 (value: 45.31)
   16. Course 15 (value: 45.18)
   17. Course 18 (value: 43.96)
   18. Course 4 (value: 43.71)
   19. Course 24 (value: 42.64)
   20. Course 5 (value: 42.55)
   21. Course 10 (value: 42.34)
   22. Course 12 (value: 41.71)
   23. Course 11 (value: 40.76)

Course Relationships:
Overlapping Content (Substitutes) - These are courses that HARMS YOU when taken together compared to taking only one, and the more the worse:
- Courses 19, 13, 16 overlap in content. Taking any two reduces their combined value by 40%
- Courses 20, 25, 3, 17, 11 overlap in content. Taking any two reduces their combined value by 40%

Complementary Courses (Complements) - These are courses that BENEFITS YOU HARMS YOU when taken together compared to taking only one, and the more the better:
- Courses 19, 12 complement each other. Taking any two increases their combined value by 40%
- Courses 20, 12, 5, 9 complement each other. Taking any two increases their combined value by 40%

Additional Constraints:
- Budget Constraint: 1.00
\end{lstlisting}

\subsection{Synthetic Student Response and Effect of Conciseness on Performance}
\label{sec:appendix-conciseness}

\subsubsection{Natural Language Descriptions of Preferences}

The natural language descriptions provided by the student are a critical component of our framework. Below is an example of a detailed, baseline response using the prompt in Section \ref{appendix:prompt}:

\textit{``I've got a pretty clear idea of what I'm looking for in the upcoming semester. For high-priority courses, I'm really set on taking Course 25 and Course 15 - both of these courses are at the top of my list, and I think they'd be a great foundation for the semester. In terms of medium-priority courses, I'd love to take Course 10, Course 24, Course 14, Course 21, Course 5, Course 4, Course 7, Course 3, Course 22, Course 19, Course 1, Course 23, Course 13, Course 16, Course 8, Course 12, Course 18, Course 2, Course 17, Course 9, Course 6, Course 20, and Course 11. I'm not as attached to these courses, but I think they'd all be great additions to my schedule. Some of these courses have potential complements or substitutes, though - for example, Course 25, Course 5, Course 1, Course 19, and Course 24 all overlap in content, which means taking any two of these courses reduces their combined value by a significant amount, and taking all five would be really suboptimal. On the other hand, Course 25, Course 3, Course 11, and Course 22 complement each other, so taking any two of these courses increases their combined value, and taking all four would be a real bonus.}

\textit{When it comes to course combinations, I'm thinking about both the complements and substitutes. The overlapping courses I mentioned earlier are definitely something to watch out for - taking any two of Course 25, Course 5, Course 1, Course 19, and Course 24 together is suboptimal, taking any three is even worse, and taking all five is basically a no-go. On the other hand, taking any two of Course 25, Course 3, Course 11, and Course 22 together is a good thing, and taking all four would be even better. I'm also aware that Course 15, Course 12, and Course 4 overlap in content, which means taking any two of these courses together is suboptimal, and taking all three is really suboptimal. However, taking any two of Course 15, Course 17, and Course 23 together is a good thing, and taking all three would be even better. I'm trying to balance out my course load to make the most of these relationships - I don't want to take too many courses that overlap in content, but I also want to take advantage of the complements.}

\textit{Overall, my strategy for course selection is to prioritize my high-priority courses and then try to balance out my medium-priority courses to take advantage of the complements and avoid the substitutes. I'm also thinking about my budget constraint - I don't want to take on too much of a financial burden, so I'm trying to be mindful of the costs associated with each course. In terms of scheduling and workload, I'm a bit concerned about how I'll fit everything in - I've got a lot of courses on my list, and I don't want to overload myself. I'm hoping to work with my advisor to figure out a schedule that makes sense and allows me to make the most of my course load."
}

This comprehensive response effectively conveys the student's preferences and provides sufficient detail to inform our mechanism. However, in practice, students might not always provide responses at this level of detail. To understand how response conciseness impacts our framework's performance, we introduce two alternative levels of brevity:

\begin{itemize}
    \item \textbf{Moderately Brief:} Responses are prompted to be limited to approximately 180 words. While the LLM may not always adhere exactly to this constraint, the intent is to capture a moderately detailed response.
    \item \textbf{Highly Brief:} Responses are prompted to be limited to approximately 80 words. Again, the LLM might not strictly follow this word limit, but the responses are qualitatively much shorter and less detailed.
\end{itemize}

These levels are tested by asking the LLM proxy simulating student responses to adhere to the respective word constraints. Below are examples of the moderately brief and highly brief responses:

\textbf{Moderately Brief ($\sim$180 words):}
\textit{``I'm really interested in taking Course 21, as it has the highest value among all the options. If I can't take it, my next choice would be Course 13. I also want to take Course 8, which is my only medium-priority course. Within the lower-priority tier, I'm interested in Course 6, Course 16, and Course 2. I'm aware that Course 13 and Course 3 have overlapping content, so taking both would be suboptimal. Similarly, taking Course 21, Course 11, Course 2, and Course 18 together would also be suboptimal. On the other hand, taking Course 13, Course 8, Course 24, and Course 5 together would be beneficial, as it would increase the combined value. The same applies to taking Course 21, Course 12, Course 8, Course 5, and Course 22 together.}

\textit{In terms of bundles, I'd like to avoid taking Course 13 and Course 3 together, as it would be suboptimal. Taking both would be bad, taking all three (Course 13, Course 3, and Course 2) would be even worse, and taking all four (Course 13, Course 3, Course 2, and Course 10) would be really bad. I'd also like to avoid taking Course 21, Course 11, Course 2, and Course 18 together, as it would be suboptimal. Taking any two would be bad, taking all three would be worse, and taking all four would be really bad. On the other hand, taking Course 13, Course 8, Course 24, and Course 5 together would be beneficial, as it would increase the combined value."}

\textbf{Highly Brief ($\sim$80 words):}
\textit{``My top priority courses are Course 11 and Course 18. I'm interested in Course 19, Course 16, and Course 22, but they're not as crucial. I also consider Course 21, Course 13, and Course 20 as potential options. For Course 6, Course 25, and Course 5, I'm not as enthusiastic. Course 23 and Course 2 are decent choices, but I'm not sure about Course 17, Course 15, and Course 1. Course 8 and Course 3 are somewhat appealing, but Course 9 and Course 24 are less desirable. Course 14, Course 12, Course 7, and Course 4 are not my first choices."}

These variations allow us to test the robustness of our framework when students provide less detailed descriptions.

\subsubsection{Impact of Brevity on Value and Correctness}

Table~\ref{tab:conciseness_performance} summarizes the performance for each level of brevity, including normalized allocated bundle values, correctness rates, and statistical significance metrics. The correctness rates show only minor differences between the baseline (72.3\%), moderately brief (71.7\%), and highly brief (68.04\%) versions. However, these differences in correctness significantly impact the normalized allocated bundle value, dropping from 119\% for the baseline to 110\% for the moderately brief version and 106\% for the highly brief version. Notably, while the moderately brief version still demonstrates statistically significant improvement, the highly brief version does not.

\begin{table}[ht]
    \centering
    \begin{tabular}{l c c c c c}
        \toprule
        \textbf{Brevity Level} & \textbf{Normalized Value} & \textbf{CQ Correctness} & \multicolumn{2}{c}{\textbf{\% of Runs}} & \textbf{P-value} \\
        \cline{4-5}
         & \(\pm\ CI\) & \(\%\) & \textbf{Better} & \textbf{Worse} \\
        \midrule
        Baseline & 119.34 \(\pm 6.50\) & 72.3 & 74 & 26 & $6.84 \times 10^{-8}$ \\
        Moderately Brief & 110.35 \(\pm 8.52\) & 71.7 & 58 & 42 & $0.021$ \\
        Highly Brief & 106.54 \(\pm 8.29\) & 68.04 & 52 & 48 & $0.129$ \\
        \bottomrule
    \end{tabular}
    \caption{Summary of performance metrics for varying levels of brevity.}
    \label{tab:conciseness_performance}
\end{table}

We evaluate the framework's performance by measuring both the normalized allocated bundle value and the correctness of the LLM in answering comparison queries. Figure~\ref{fig:normalized_value} plots the changes in normalized allocated bundle value as the number of elicited CQs increases, while Figure~\ref{fig:correctness_rate} shows the correctness of the LLM at answering comparison queries.

\begin{figure}[ht]
    \centering
    \includegraphics[width=0.8\linewidth]{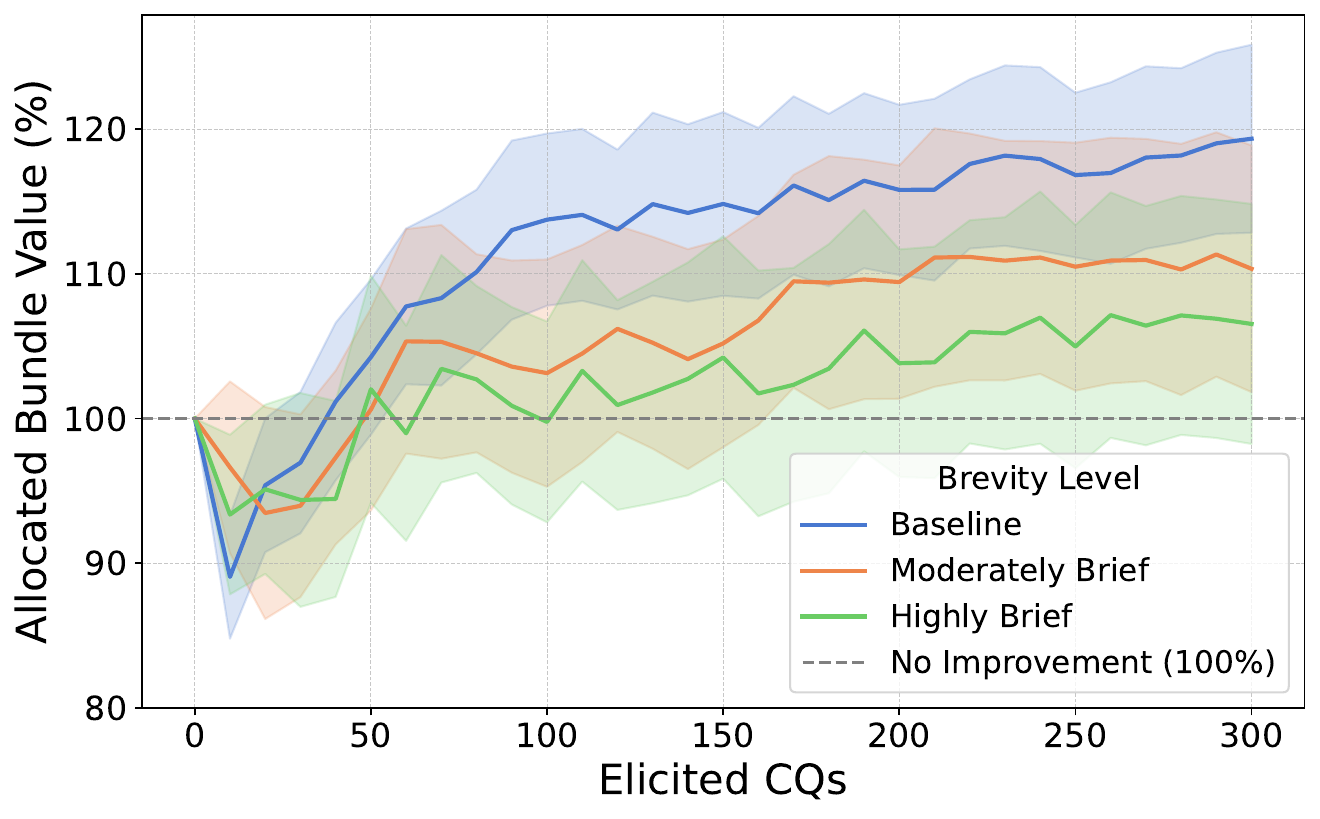}
    \caption{Normalized allocated bundle value as a function of elicited CQs for different brevity levels.}
    \label{fig:normalized_value}
\end{figure}

\begin{figure}[ht]
    \centering
    \includegraphics[width=0.8\linewidth]{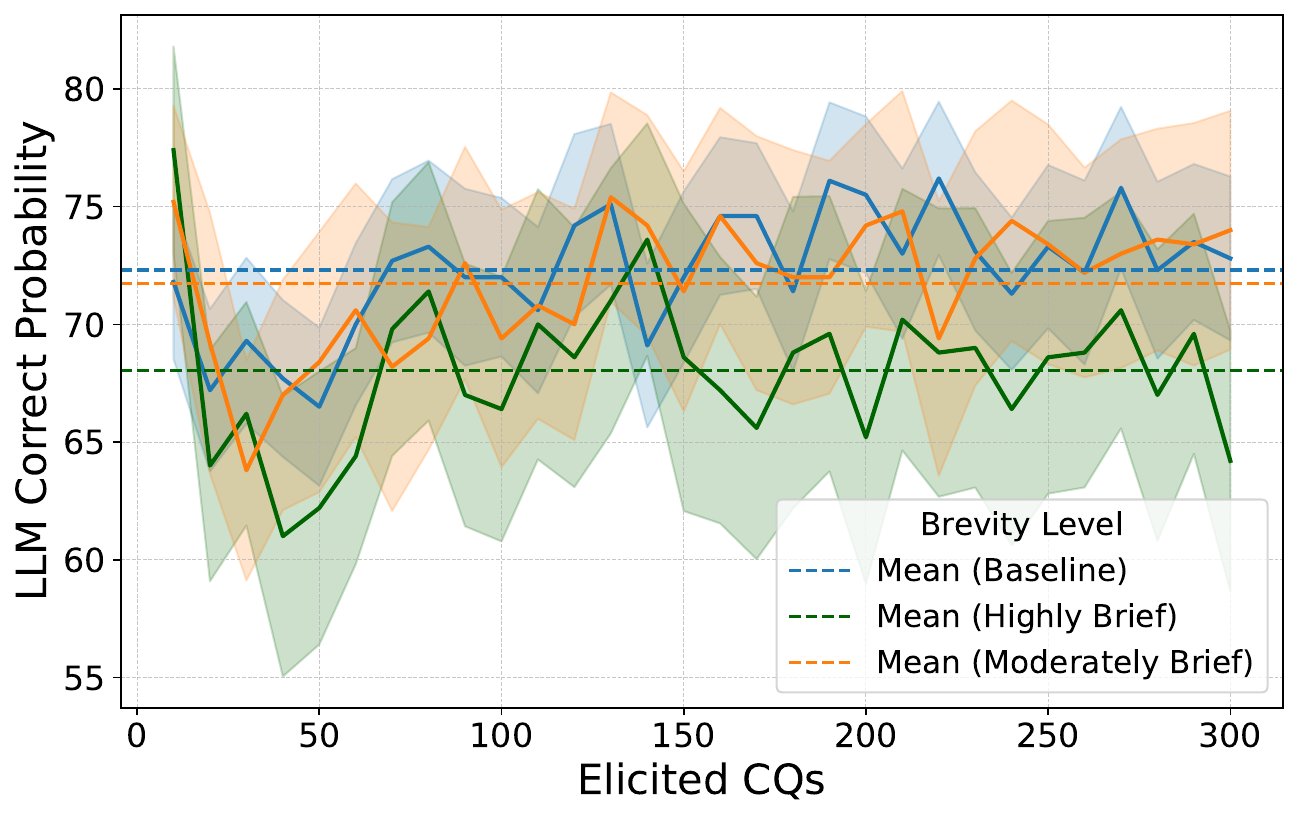}
    \caption{Correctness rate of the LLM at answering comparison queries for different brevity levels.}
    \label{fig:correctness_rate}
\end{figure}

\subsubsection{Discussion}

These results suggest that our framework is robust to a degree when students provide less detailed responses. However, as the level of brevity increases, the framework's performance deteriorates. This is intuitive since the mechanism relies on the student's textual descriptions to answer comparison queries accurately. To mitigate this, incentivizing students to provide thorough responses or leveraging more advanced language models with better correctness guarantees can help maintain high performance.

\subsection{Additional Figures}

\label{sec:additional_results}

In this section, we present supplementary analyses and results that provide additional insights into our experimental findings. Figure~\ref{fig:llm_cq_correctness_cot} demonstrates the impact of Chain-of-Thought (CoT) reasoning on the LLM's accuracy in processing constraint queries. This analysis complements our main results by showing the direct relationship between CoT implementation and improved model performance.

\begin{figure}[h]
    \centering
    \includegraphics[width=\linewidth]{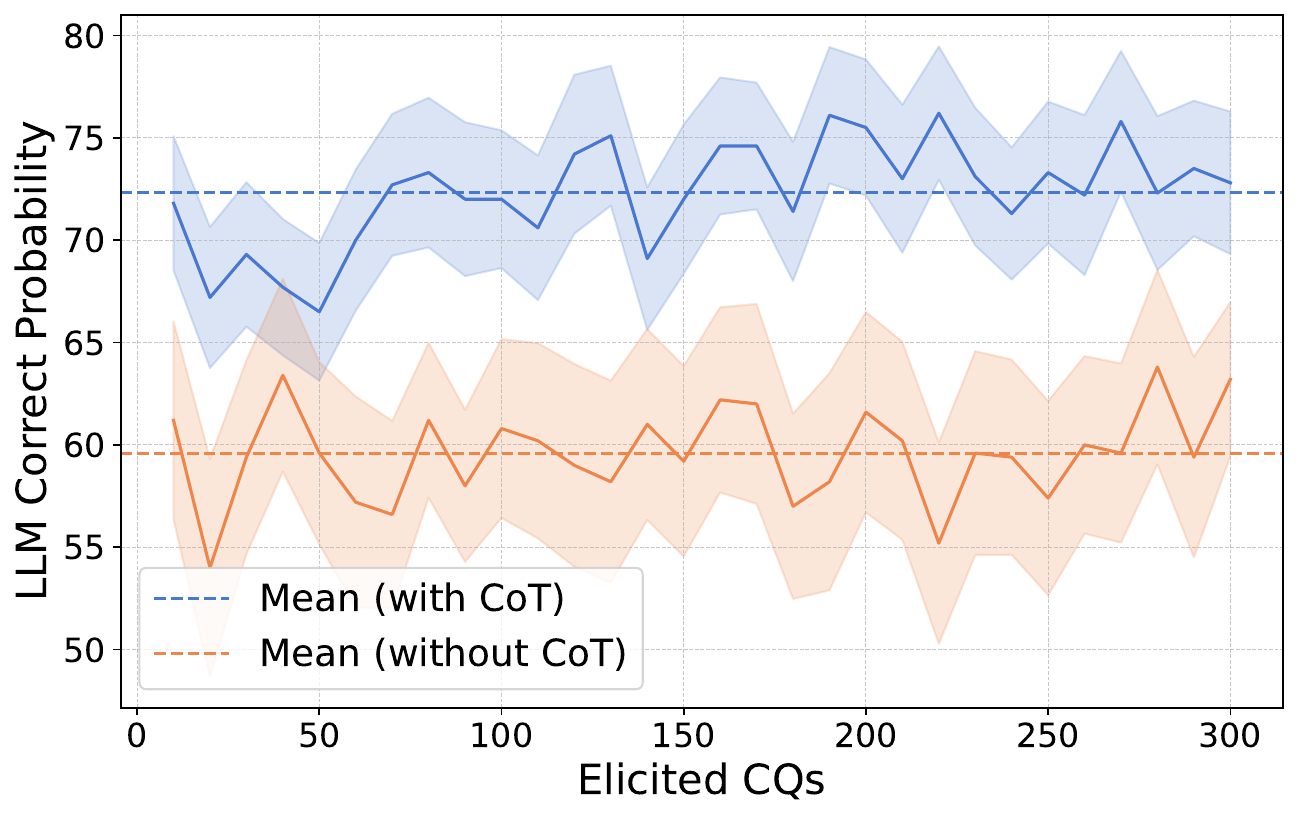}
    \vspace{-0.5cm}
    \caption{Comparison of the LLM Correct Probability with and without the Chain-of-Thought (CoT) reasoning approach using LLaMA 3.1 8b model. The use of CoT increases the correctness probability from 59\% to over 70\%, enabling the pipeline to achieve statistically significant value improvements.
    }
    \label{fig:llm_cq_correctness_cot}
\end{figure}

\subsection{Discussion}

These results suggest that our framework is robust to a degree when students provide less detailed responses. However, as the level of conciseness increases, the framework's performance deteriorates. This is intuitive since the mechanism relies on the student's textual descriptions to answer comparison queries accurately. To mitigate this, incentivizing students to provide thorough responses or leveraging more advanced language models with better correctness guarantees can help maintain high performance.

\subsection{Examples of comparison query output} 
\label{appendix:output}

The following is a correct answer for one CQ.
\begin{lstlisting} [basicstyle=\ttfamily\small, breaklines=true]
<PREFERENCES> Bundle A: Bundle A contains Courses 1, 9, 12, 22, 24. Course 22 is high preference, Course 9 is low preference, Courses 1, 12, 24 are not mentioned in preferences. Bundle B: Bundle B contains Courses 8, 9, 12, 16, 22. Course 22 is high preference, Course 9 is low preference, Courses 8, 12, 16 are not mentioned in preferences. </PREFERENCES>

<COMPLEMENTS> Bundle A: Bundle A contains Courses 1, 9, 12, 22, 24. Course 22 and Course 12 are complements which helps significantly when taken together, Course 9 and Course 22 are complements which helps moderately when taken together. Bundle B: Bundle B contains Courses 8, 9, 12, 16, 22. Course 22 and Course 12 are complements which helps significantly when taken together, Course 9 and Course 16 are complements which helps significantly when taken together. </COMPLEMENTS>

<SUBSTITUTES> Bundle A: Bundle A contains Courses 1, 9, 12, 22, 24. Course 22 and Course 12 are substitutes which harms moderately when taken together, Course 9 and Course 12 are substitutes which harms moderately when taken together. Bundle B: Bundle B contains Courses 8, 9, 12, 16, 22. Course 22 and Course 12 are substitutes which harms moderately when taken together, Course 9 and Course 12 are substitutes which harms moderately when taken together. </SUBSTITUTES>

<REASONING> From the above, in terms of preferences, both bundles are similar since both contain high and low preference courses. In terms of the presence and magnitude of complements, Bundle B is better as it has two significant complements. In terms of magnitude and presence of substitutes, both bundles are similar. Considering the tradeoffs, Bundle B is better. </REASONING> 

<CHOICE>Bundle B</CHOICE>
\end{lstlisting}

The following is an incorrect answer for the CQ. The difference between the two bundles is minimal, and the LLM simply compared the number of areas where each bundle excels.
\begin{lstlisting} [basicstyle=\ttfamily\small, breaklines=true]
 <PREFERENCES> Bundle A: Bundle A contains Courses 9, 12, 14, 17, 21. Course 14 is high preference, Course 9 is low preference. Bundle B: Bundle B contains Courses 2, 14, 17, 21, 22. Course 14 and Course 22 are high preference. </PREFERENCES> 
 
 <COMPLEMENTS> Bundle A: Bundle A contains Courses 9, 12, 14, 17, 21. Course 14 and Course 9 are complements which helps significantly when taken together. Course 12 and Course 14 are complements which helps significantly when taken together. Bundle B: Bundle B contains Courses 2, 14, 17, 21, 22. Course 14 and Course 22 are complements which helps significantly when taken together. </COMPLEMENTS> 
 
 <SUBSTITUTES> Bundle A: Bundle A contains Courses 9, 12, 14, 17, 21. Course 14 and Course 12 are substitutes which harms moderately when taken together. Bundle B: Bundle B contains Courses 2, 14, 17, 21, 22. Course 14 and Course 22 are substitutes which harms significantly when taken together. </SUBSTITUTES> 
 
 <REASONING> From the above, in terms of preferences, Bundle B is better. In terms of the presence and magnitude of complements, Bundle A is better. In terms of magnitude and presence of substitutes, Bundle A is better. Considering the tradeoffs, Bundle A is better. </REASONING> 
 
 <CHOICE>Bundle A</CHOICE> 
\end{lstlisting}

\section{Details from \Cref{sec:experimets}}

\subsection{Neural Network Architecture and Training Algorithm Details} \label{sec:mvnns_and_learning_alg}

Similar to \citet{soumalias2024machine}, for the baseline architecture of the student's ML model, we used MVNNs \citep{weissteiner2022monotone}. 
These are a special neural network architecture designed to model monotone value functions in combinatorial assignment problem. 
The training algorithm that we used to integerate both the student's GUI reports and the LLM-answered CQs is the algorithm proposed in \citet{soumalias2024machine}, as that algorithm was designed to trained on exactly the same dataset types as we have.

\subsection{Hyperparameter Optimization (HPO)} \label{sec:HPO}
In this section, we describe our hyperparameter optimization (HPO) methodology and the parameter ranges used.

Given the computational cost of querying LLMs, we employed \textit{simulated LLMs} during the HPO process. First, we measured the accuracy of our default LLM architecture (as shown in \Cref{fig:llm_cq_accuracy}). During HPO, we simulated LLM responses by providing both correct and incorrect answers with probabilities that reflected the measured accuracy, assuming i.i.d. mistakes across queries.

The training algorithm described in \citet{soumalias2024machine} begins by training a student's ML model on her reports in the mechanism's original language, creating a cardinal dataset from those reports. The model is then fine-tuned based on student-answered CQs. In our framework, however, LLMs perform this task instead of students. Therefore, we optimized hyperparameters related to training on CQs and the acquisition function responsible for selecting which CQs the LLM answers.

Our optimization goal was to maximize the student's allocated bundle value, which serves as the primary performance metric. For each configuration tested, we ran separate instances and averaged performance over 10 runs, using different seeds from those employed in the experiments presented in \Cref{sec:experimets}.

The best-performing configuration and the full range of hyperparameters are provided in \Cref{tab:HPO_ranges}.

\begin{table}[h]
\centering
\resizebox{0.85\textwidth}{!}{
\begin{tabular}{lll}
\toprule
\multicolumn{1}{l}{\textbf{Hyperparameter}} &   \multicolumn{1}{l}{\textbf{HPO-Range}} & \multicolumn{1}{l}{\textbf{Winning Configuration}} \\
\midrule
LLM CQ Batch Size            & {[}1, 2, 4, 8, 16, 32{]}                          & 1                           \\
LLM CQ Epochs                & {[}2, 5, 10, 20, 50, 100, 200, 500, 1000{]}       & 10                           \\
LLM CQ Learning Rate         & (0.0001, 0.1) & 0.01                                            \\
LLM CQ Weight Decay          & {[}0, 1e-8, 1e-6, 1e-5, 1e-4, 1e-3, 1e-2, 1e-1{]} & 0.01                         \\
LLM CQ Gradient Clipping     & {[}0.01, 0.02, 0.05, 0.1, 0.2, 0.5, 1, 2, 5, 10{]} & 0.2                          \\
LLM Generalized Cross Entropy $q$ & (0.001, 1.0) & 0.3                              \\
\midrule
LLM CQs                      & [200, 300, 500]                                     & 500                          \\
LLM Acquisition Function     & [Boltzmann, Infomax, Random, Double TS] & Double TS \\
\bottomrule
\end{tabular}
}
\vskip 0.1cm
    \caption{HPO ranges and winning parameters for our framework.}
    \label{tab:HPO_ranges}
\end{table}

\subsection{Detailed Learning Experiments} \label{sec:learning_comparison_details}
In this section, we report a more detailed version of \Cref{tab:learning_comparison} from section \Cref{sec:experiment_results_learning}. 

We observe that our framework causes a dramatic improvement in learning performance for \textit{all} metrics and quantiles tested.

\begin{table}[ht]
\centering
\begin{tabular}{l r c c c c c}
\toprule
\textbf{Metric} & \multicolumn{2}{c}{\textbf{Whole Test Set}} & \multicolumn{2}{c}{\textbf{Top 10\% Quantile}} & \multicolumn{2}{c}{\textbf{Top 5\% Quantile}} \\
\cmidrule(lr){2-3} \cmidrule(lr){4-5} \cmidrule(lr){6-7}
& \textbf{Before} & \textbf{After} & \textbf{Before} & \textbf{After} & \textbf{Before} & \textbf{After} \\
\midrule
$\text{MAE}_C$      & $46.14$  & \ccell $30.69$ & $51.23$ & \ccell $27.96$ & $51.15$  & \ccell  $26.19$  \\
$\text{KT}$         & $0.266$  & \ccell $0.389$ & $0.121$ & \ccell $0.283$ &  $0.102$ & \ccell $0.258$ \\
$\text{MSE}_C$      & $3475$   & \ccell $1942$  & $4172$  & \ccell $1500$  & $4139.6$ & \ccell $1309.4$ \\
$R^2_C$             & $-0.62$  & \ccell $0.22$  & $-2.50$ & \ccell $0.14$  & $-3.52$  & \ccell $0.11$ \\
\bottomrule
\end{tabular}
\caption{Comparison of learning metrics before and after the LLM-elicited CQs. Each metric is reported for the whole test set, the top 10\% quantile, and the top 5\% quantile in terms of student value. 
Metrics include MAE, Kendall’s Tau (KT), Mean Squared Error Centered (MSE-C), and R-squared centered ($R^2_C$). 
Shown are averages over $100$ runs.
Winners based on a paired t-test with $\alpha=1\%$ are marked in grey. \mjc{tough to squash this table to fit in 1 col. but too small for table* two-col table. we should consider breaking it up somehow if possible.}}
\label{tab:learning_comparison_full}
\end{table}

\end{document}